\documentclass{article}

\usepackage[preprint]{neurips_2026}
\usepackage{amsmath}

\usepackage[utf8]{inputenc} 
\usepackage[T1]{fontenc}    
\usepackage{hyperref}       
\usepackage{url}            
\usepackage{booktabs}       
\usepackage{amsfonts}       
\usepackage{nicefrac}       
\usepackage{microtype}      
\usepackage{xcolor}         
\usepackage{graphicx}      

\usepackage{amssymb}

\usepackage{amsmath}

\usepackage{amsthm}

\newtheorem{assumption}{Assumption}
\newtheorem{lemma}{Lemma}
\newtheorem{definition}{Definition}
\newtheorem{corollary}{Corollary}
\newtheorem{theorem}{Theorem}

\title{A Measure-Theoretic Analysis of Reasoning: Structural Generalization and Approximation Limits
}

\author{
  Yuyang Zhang\\
  McGill University\\
  \texttt{yuyang.zhang@mail.mcgill.ca} \\
  \And
  Yifu Zhang \\
  McGill University \\
  \texttt{yifu.zhang3@mail.mcgill.ca} \\
  \AND
  Xuehai Zhou \\
  McGill University \\
  \texttt{xuehai.zhou@mail.mcgill.ca} \\
  \And
  Xiaoyin Chen\thanks{Corresponding author.} \\
  Mila - Quebec AI Institute \\
  Université de Montréal\\
  \texttt{xiaoyin.chen@mila.quebec} \\
}

\begin{document}

\maketitle
\begin{abstract}
While empirical scaling laws for LLM reasoning are well-documented, the theoretical mechanisms governing out-of-distribution (OOD) generalization remain elusive. We formalize reasoning via optimal transport, projecting discrete trajectories into a continuous metric space to quantify domain shifts using the Wasserstein-1 distance. Invoking Kantorovich duality, we bound OOD generalization via architectural Lipschitz continuity and functional approximation limits. This exposes two primary constraints. First, position-dependent attention (e.g., Absolute Positional Encoding) fails to preserve shift invariance, yielding an $\Omega(1)$ Lipschitz constant and expected risk, whereas shift-invariant mechanisms (e.g., Rotary Embeddings) preserve equivariance and bound the error. Second, by mapping sequential backtracking to a Dyck-$k$ language, we establish a strict circuit depth lower bound for $\text{TC}^0$ Transformers. Scaling physical layer depth is necessary to avert representation collapse---a constraint that scaling representation width cannot bypass due to irreducible approximation bounds in Barron spaces. Evaluations across 54 Transformer configurations on combinatorial search corroborate these bounds, demonstrating that generalization risk degrades monotonically with the Wasserstein domain shift.
\end{abstract}

\section{Introduction}
\label{sec:intro}

The transition of large language models (LLMs) from associative pattern matching to explicit, multi-step reasoning has been catalyzed initially by prompting techniques~\citep{wei2023chainofthoughtpromptingelicitsreasoning,yao2023treethoughtsdeliberateproblem} and, increasingly, by Reinforcement Learning and other post-training methods~\citep{lightman2023let, o1, deepseekai2025deepseekr1incentivizingreasoningcapability,lambert2024tulu}. In formal logical domains, such as combinatorial planning and mathematical problem-solving, these intermediate reasoning trajectories correspond to traversals over an implicit algorithmic search tree. While the empirical scaling properties of reasoning have been heavily documented, a fundamental theoretical gap persists. This raises a critical research question: \textit{Under what specific architectural conditions can an autoregressive model reliably generalize its reasoning capabilities across macroscopic structural domain shifts?} When a model trained on shallow search steps is evaluated on deep, highly sequential Out-of-Distribution (OOD) reasoning spaces, standard sequence-level log-likelihoods and discrete zero-one losses fail to geometrically isolate the fidelity of the model's output, leaving this question largely unanswered.

Currently, theoretical and empirical analyses of LLM reasoning remain fragmented. Complexity bounds for Transformers ($\text{TC}^0$) establish limitations on inherently sequential reasoning but abstract away training data distributions \citep{Hahn_2020, merrill2023parallelismtradeofflimitationslogprecision}. Studies on length extrapolation evaluate mechanisms like Absolute Positional Encoding (APE) and Rotary Position Embedding (RoPE) \citep{kazemnejad2023impactpositionalencodinglength, ruoss2023randomizedpositionalencodingsboost}, yet rarely do so while jointly bounding computational capacity. Conversely, data-centric analyses emphasize distributional shifts but treat the architecture as an analytic black box \citep{zhao2026chainofthoughtreasoningllmsmirage, prystawski2023thinkstepstepreasoning}. No existing framework bridges discrete circuit complexity with continuous distribution shifts.

To address this limitation, our primary contribution is a measure-theoretic framework that mathematically unifies four critical variables in LLM reasoning: discrete circuit complexity (layer depth $L$), representation capacity (hidden width $m$), architectural priors (positional encodings), and continuous data distributions (measure shifts).

We formulate the evaluation of reasoning as an optimal transport problem. First, we project discrete, variable-length reasoning trajectories into a continuous macroscopic metric space. By extracting scale-invariant features such as nesting depth and backtracking rates, this projection allows us to represent any reasoning dataset as a pushforward probability measure. Consequently, the distributional discrepancy between distinct reasoning tasks can be rigorously quantified via the Wasserstein-1 distance. 

To bridge this continuous domain shift with the discrete expected risk, we introduce an $\epsilon$-relaxed structural sufficiency assumption. By demonstrating that the expected zero-one loss can be approximated by a Lipschitz continuous structural risk function over our metric space, we invoke Kantorovich duality. What falls out of this formulation is a powerful decomposition: the target generalization risk is strictly bounded by three competing terms---the optimal transport cost (Wasserstein domain shift), the induced Lipschitz constant of the architecture ($K_{PE}$), and its functional approximation bounds ($C_{\mathrm{width}}$ and $C_{\mathrm{depth}}$). This non-additive lower bound dictates that breaching any single architectural limit induces an $\Omega(1)$ expected risk. From this unified bound, we establish two architectural laws of reasoning:

First, we analyze translation equivariance and the Lipschitz gap by contrasting the two predominant paradigms of sequence position modeling: absolute and relative encodings \citep{zhao2024lengthextrapolationtransformerssurvey}. We select standard Absolute Positional Encoding (APE) as the baseline for position-dependent attention kernels, and Rotary Position Embedding (RoPE) \citep{su2023roformerenhancedtransformerrotary} as the representative of a broader class of relative, translation-equivariant positional schemes. 
Robust generalization requires the attention mechanism to remain invariant to global sequence shifts, as locally consistent search sub-trajectories occur at arbitrary absolute indices.
We prove that APE inherently fails to preserve this symmetry; shifting a trajectory's absolute coordinates causes the local attention kernel to diverge almost surely. This coordinate disruption yields a macroscopic structural Lipschitz constant $K_{f, APE} = \Omega(1)$ that guarantees an $\Omega(1)$ expected out-of-distribution risk. Conversely, we demonstrate that RoPE acts as a continuous structural regularizer that preserves sequence equivariance, averting this representation collapse.

Second, building upon the $\text{TC}^0$ limits of Transformers \citep{Hahn_2020}, we formalize a circuit depth lower bound. By mapping the grammar of sequential backtracking to a Dyck-$k$ language, we link continuous measure shifts with discrete circuit complexity. Because balanced parenthesis matching is complete for the complexity class $\text{NC}^1$, the fixed physical layer depth $L$ of $\text{TC}^0$ Transformers acts as an algorithmic truncation boundary. Scaling physical depth is a necessary prerequisite to avert an $\Omega(1)$ expected risk---a constraint that scaling representation width $m$ cannot bypass due to irreducible $\Omega(m^{-1/2})$ approximation bounds in Barron spaces \citep{Barron1993, MAKOVOZ199698}.

To rigorously test our framework, we evaluate 54 distinct architectural configurations (crossing Small/Medium/Large scales, Deep/Wide/Balanced aspect ratios, and APE/RoPE) against controlled target distributions. Rather than relying on arbitrary datasets, we synthesize distinct measures within our proxy space, isolating behaviors such as shallow lateral state expansions, deep recursive backtrackings, and highly translated compositional mixtures. The empirical results corroborate our measure-theoretic bounds: 
(1) The empirical accuracy on deep, highly sequential reasoning structures is strictly capped by the theoretically derived $\text{TC}^0$ upper bound. 
(2) Scaling parameter counts via representation width intrinsically plateaus or collapses, whereas scaling physical layer depth establishes a dominant performance trajectory. 
(3) Under severe coordinate shifts induced by mixture distributions, wide APE models suffer an $\Omega(1)$ representation collapse, while RoPE restores stability. 
(4) Generalization risk degrades monotonically in proportion to the continuous Wasserstein distance between the source and target measures. 
Ultimately, our framework establishes that LLMs undergoing reasoning perform structural regressions over a continuous metric space.

\section{Preliminaries and Measure-Theoretic Formulation}
\label{sec:preliminaries}

To analyze the generalization of autoregressive models on search-based reasoning tasks, we formalize the generation process. We map discrete, variable-length sequence trajectories into a continuous metric space, quantifying domain shifts via optimal transport. Detailed measure-theoretic constructions and full proofs are deferred to Appendices \ref{app:setup} and \ref{app:structural-results}.
 
\subsection{The Measurable Space of Search Trajectories}

Let $\mathcal{V}$ be a finite discrete vocabulary. The ambient space of bounded-length sequences up to a maximum physical length $T_{max} \in \mathbb{N}^+$ is defined as $\mathcal{Z}_{all} = \bigcup_{t=1}^{T_{max}} \mathcal{V}^t$. We define the search trajectory space $\mathcal{Z} \subset \mathcal{Z}_{all}$ as the subset of sequences satisfying a valid autoregressive search grammar $\Gamma$. Every trajectory $z \in \mathcal{Z}$ admits a unique decomposition via the string concatenation operator $\oplus$: $z = x_{init}^{(z)} \oplus y_{search}$, representing the initial problem state and the generated search trace.

Given finite $|\mathcal{V}|$ and $T_{max}$, the set $\mathcal{Z}$ is finite. We equip $\mathcal{Z}$ with the discrete $\sigma$-algebra $\Sigma_{\mathcal{Z}} := 2^{\mathcal{Z}}$, forming a complete measurable space with source and target probability measures $\mu_S, \mu_T \in \mathcal{P}(\mathcal{Z})$.

Let $\mathcal{F}_{L, m, PE}$ denote the hypothesis class of autoregressive Transformer functions characterized by physical layer depth $L$, hidden dimension $m$, and positional encoding scheme $PE \in \{\text{APE}, \text{RoPE}\}$. A model $f \in \mathcal{F}_{L, m, PE}$ acts as a deterministic mapping $f: \mathcal{V}^* \to \mathcal{Z}$. Let $S_z: \mathcal{Z} \to \mathbb{R}$ be a deterministic scoring function parameterized by the target state of the reference trajectory $z$, evaluating the formal validity of the generated output. We define the zero-one loss function $\ell: \mathcal{F}_{L, m, PE} \times \mathcal{Z} \to \{0, 1\}$ as:
\begin{equation}
    \ell(f, z) = 1 - \mathbb{I}_{(0, \infty)}\left(S_z\left(f(x_{init}^{(z)})\right)\right)
\end{equation}
For any measure $\mu \in \mathcal{P}(\mathcal{Z})$, the expected risk of $f$ is defined as the Lebesgue integral over the discrete space: $\mathcal{R}_{\mu}(f) = \int_{\mathcal{Z}} \ell(f, z) \, d\mu(z)$.

\subsection{Structural Projection and Pushforward Measures}
\label{subsec:structural_projection}

Computing distances directly over discrete sequences obscures search behavior. Thus, we map trajectories into a continuous metric space $\mathcal{X} \subset \mathbb{R}^{12}$ equipped with the Euclidean metric $d_{\mathcal{X}}(\boldsymbol{u}, \boldsymbol{v}) = \|\boldsymbol{u} - \boldsymbol{v}\|_2$. As volumetric coordinates (e.g., $\chi_4$) scale with sequence capacity $T_{max}$, the geometric diameter $\operatorname{diam}(\mathcal{X}) = \Theta(T_{max})$. Thus, for any finite $T_{max}$, $\mathcal{X}$ is a bounded, locally compact subset.

\begin{definition}[Structural Projection Operator $\Phi$]
\label{def:structural_projection}
We define a deterministic, Borel measurable projection operator $\Phi: \mathcal{Z} \to \mathcal{X}$, mapping $z \mapsto \boldsymbol{\chi} =[\chi_1, \dots, \chi_{12}]^\top$. 
This operator provides a summary of the graph-traversal dynamics, encoding maximum tree depth ($\chi_1$), total search volume ($\chi_4$), directional state transitions (e.g., child expansion $\chi_5$, parent backtracking $\chi_6$, sibling lateral shifts $\chi_7$), and pruning frequencies.
\end{definition}

By applying this projection, we define the \textit{structural pushforward measures} $\nu_S, \nu_T \in \mathcal{P}(\mathcal{X})$ induced by $\Phi$: $\nu_S := \Phi_\# \mu_S$ and $\nu_T := \Phi_\# \mu_T$. The structural measure shift is quantified by the Wasserstein-1 (Kantorovich-Rubinstein) distance over the coupled metric space:
\begin{equation}
    W_1(\nu_S, \nu_T) = \inf_{\pi \in \Pi(\nu_S, \nu_T)} \int_{\mathcal{X} \times \mathcal{X}} d_{\mathcal{X}}(\boldsymbol{u}, \boldsymbol{v}) \, d\pi(\boldsymbol{u}, \boldsymbol{v})
\end{equation}
where $\Pi(\nu_S, \nu_T)$ denotes the set of all joint probability measures on $\mathcal{X} \times \mathcal{X}$ with marginals $\nu_S$ and $\nu_T$. Because $\mathcal{X}$ is bounded for finite $T_{max}$, the integral converges absolutely, yielding $W_1 \le \mathcal{O}(T_{max})$.


\subsection{Error Decomposition via Kantorovich Duality}

To bridge the continuous domain shift $W_1(\nu_S, \nu_T)$ and the discrete expected risk $\mathcal{R}_\mu(f)$, we establish a sufficiency assumption that decouples the architecture's capacity from the empirical data distribution.

\begin{assumption}[$\epsilon$-Relaxed Structural Sufficiency]
\label{assum:epsilon_relaxed}
Let $\epsilon > 0$ be an irreducible approximation error. We assume there exists a measurable risk function $\mathcal{L}_f: \mathcal{X} \to [0, 1]$. For any trajectory measure $\mu \in \{\mu_S, \mu_T\}$, the absolute difference between the true expected risk on $\mathcal{Z}$ and the expected risk integrated over $\Phi_\# \mu$ on $\mathcal{X}$ is bounded by $\epsilon$:
\vspace{-0.2cm}
{\small
\begin{equation}
    \left| \mathcal{R}_{\mu}(f) - \int_{\mathcal{X}} \mathcal{L}_f(\boldsymbol{\chi}) \, d(\Phi_\# \mu)(\boldsymbol{\chi}) \right| \le \epsilon
\end{equation}
}
\vspace{-0.2cm}
\end{assumption}

This assumption formalizes the premise that the macroscopic representation $\Phi(z)$ captures the task-relevant components of the search trajectory up to a microscopic residual $\epsilon$. We emphasize that this condition naturally holds in formal combinatorial planning and algorithmic search tasks, where outcome correctness is strictly dominated by macroscopic structural routing (e.g., sufficient tree depths and backtracking) rather than isolated token semantics. Consequently, the out-of-distribution generalization gap is bounded by the Lipschitz continuity of the underlying neural architecture.

\begin{lemma}[$\epsilon$-Relaxed Error Decomposition]
\label{lemma:error_decomposition}
Assume the induced risk function $\mathcal{L}_f(\boldsymbol{\chi})$ is Lipschitz continuous on $(\mathcal{X}, d_{\mathcal{X}})$ with minimal Lipschitz constant $K_f = \sup_{\boldsymbol{u} \neq \boldsymbol{v}} |\mathcal{L}_f(\boldsymbol{u}) - \mathcal{L}_f(\boldsymbol{v})| / d_{\mathcal{X}}(\boldsymbol{u}, \boldsymbol{v})$. Then the generalization error gap is bounded by:
\begin{equation}
    |\mathcal{R}_{\mu_T}(f) - \mathcal{R}_{\mu_S}(f)| \le K_f \cdot W_1(\nu_S, \nu_T) + 2\epsilon
\end{equation}
\end{lemma}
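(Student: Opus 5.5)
The plan is a triangle-inequality argument that inserts the two structural integrals given by Assumption~\ref{assum:epsilon_relaxed}. Write $I_\mu := \int_{\mathcal{X}} \mathcal{L}_f \, d(\Phi_\#\mu)$. Then
\begin{equation*}
|\mathcal{R}_{\mu_T}(f) - \mathcal{R}_{\mu_S}(f)| \le |\mathcal{R}_{\mu_T}(f) - I_{\mu_T}| + |I_{\mu_T} - I_{\mu_S}| + |I_{\mu_S} - \mathcal{R}_{\mu_S}(f)|.
\end{equation*}
Assumption~\ref{assum:epsilon_relaxed} is stated for both $\mu_S$ and $\mu_T$, so the first and third terms are each at most $\epsilon$. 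These two terms produce the $2\epsilon$ in the bound. Since $\nu_S = \Phi_\#\mu_S$ and $\nu_T = \Phi_\#\mu_T$, the middle term is exactly $\left|\int \mathcal{L}_f\, d\nu_T - \int \mathcal{L}_f\, d\nu_S\right|$.

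The middle term will be bounded by Kantorovich--Rubinstein duality. If $K_f = 0$, then $\mathcal{L}_f$ is constant and the middle term vanishes, so the claim holds. Otherwise $g := \mathcal{L}_f / K_f$ is $1$-Lipschitz on $(\mathcal{X}, d_{\mathcal{X}})$. Duality gives $W_1(\nu_S,\nu_T) = \sup_{\|h\|_{\mathrm{Lip}}\le 1} \int h\, d(\nu_T - \nu_S)$, and applying it to $g$ and to $-g$ yields $\left|\int \mathcal{L}_f\, d(\nu_T-\nu_S)\right| \le K_f W_1(\nu_S,\nu_T)$.

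Duality can also be avoided with a one-line coupling argument. For any coupling $\pi \in \Pi(\nu_S,\nu_T)$,
\begin{equation*}
\int \mathcal{L}_f\, d\nu_T - \int \mathcal{L}_f\, d\nu_S = \int (\mathcal{L}_f(\boldsymbol{v}) - \mathcal{L}_f(\boldsymbol{u}))\, d\pi(\boldsymbol{u},\boldsymbol{v}).
\end{equation*}
Its absolute value is therefore at most $K_f\int d_{\mathcal{X}}(\boldsymbol{u},\boldsymbol{v})\,d\pi$. Taking the infimum over $\pi$ gives the same bound. I would present this direct route, because it uses only the easy direction and needs no compactness hypothesis.

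The only technical point is checking that all the integrals are well defined and finite. $\mathcal{Z}$ is finite, so $\nu_S$ and $\nu_T$ are finitely supported Borel measures on $\mathcal{X}$. $\mathcal{L}_f$ is measurable and takes values in $[0,1]$, so every integral is finite. $\mathcal{X}$ is bounded, so $W_1$ is finite as noted in Section~\ref{subsec:structural_projection}. The couplings in $\Pi(\nu_S,\nu_T)$ can be taken as discrete measures on the finite product of the two supports, which makes the coupling identity above a finite sum. The only remaining step is to combine the three terms.
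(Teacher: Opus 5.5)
Your proposal is correct and follows the paper's proof: the same three-term triangle inequality with Assumption~\ref{assum:epsilon_relaxed} supplying the $2\epsilon$, the same $K_f=0$ case split, and the same normalization $g=\mathcal{L}_f/K_f$ fed into Kantorovich--Rubinstein duality. The direct coupling bound you prefer is just the easy direction of that duality. It is a sound simplification: it removes the need for the $K_f=0$ case split and for any completeness or compactness of $\mathcal{X}$, whereas the paper invokes the full duality theorem even though it uses only this one inequality.
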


\begin{proof}[Proof Sketch]
Applying the triangle inequality alongside Assumption \ref{assum:epsilon_relaxed} bounds the risk difference $|\Delta \mathcal{R}|$ by $|\int \mathcal{L}_f d\nu_T - \int \mathcal{L}_f d\nu_S| + 2\epsilon$. Normalizing the structural risk function as $g = \mathcal{L}_f / K_f$ trivially yields $\|g\|_{\mathrm{Lip}} \le 1$. Invoking the Kantorovich-Rubinstein duality, the normalized integral difference $|\int g \, d\nu_T - \int g \, d\nu_S|$ is bounded by $W_1(\nu_S, \nu_T)$. Scaling the duality bound by $K_f$ yields the claim.
\end{proof}

\section{Theoretical Limits of Architectural Choices}
\label{sec:theoretical_limits}

Having established that the out-of-distribution generalization gap is bounded by the Lipschitz constant $K_f$ and the approximation bounds of $\mathcal{L}_f$, we now formalize how specific architectural priors govern these bounds. We dissect the autoregressive Transformer architecture along two computational axes: the positional encoding scheme (APE vs. RoPE) and the aspect ratio of the computational graph (Deep vs. Wide).

\subsection{Sequence Equivariance and the Lipschitz Gap}

Algorithmic search trajectories are inherently modular. A locally consistent reasoning sub-trajectory (e.g., a specific subtree traversal) can occur at arbitrary absolute sequence indices, depending entirely on the volume of preceding graph explorations. To quantify a model's robustness to these domain shifts, we examine the behavior of the attention kernel under sequence translations.

\begin{definition}[Minimal Sequence Translation Operator $\mathcal{T}_k$]
\label{def:translation_operator}
Let $z \in \mathcal{Z}$. We define the translation operator $\mathcal{T}_k: \mathcal{Z} \to \mathcal{Z}$ for $k \in \mathbb{N}^+$ as the operation that inserts one structurally inert node whose textual realization contains exactly $k$ tokens immediately before a local reasoning sub-trajectory $\omega$.

Under $\Phi$, this increments the search volume ($\chi_4'=\chi_4+1$), while every token in the subsequent $\omega$ undergoes an absolute index shift of $+k$. Hence the distance in $\mathcal{X}$ is
\vspace{-0.20cm}
{\small
\begin{equation}
    d_{\mathcal X}\!\left(\Phi(z),\Phi(\mathcal T_k(z))\right) = \left(1^2+\sum_{m\neq 4}(\Delta\chi_m)^2\right)^{1/2} =: \delta \le C_\delta
\end{equation}
}
\vspace{-0.20cm}
where $C_\delta$ is independent of $T_{\max}$.
\end{definition}

For a model to generalize across diverse tree structures, its local computational graph must remain invariant under $\mathcal{T}_k$. We evaluate the pre-softmax attention score $A_{a,b}$ between two tokens in $\omega$ originally at indices $a$ and $b$, now shifted to $a+k$ and $b+k$.

\begin{lemma}[APE Absolute Coordinate Sensitivity]
\label{lemma:ape_shattering}
Assume the network parameters (projection matrices $\mathbf{W}_Q, \mathbf{W}_K$ and absolute embeddings $\mathbf{p}_t$) are generically parameterized, and the projection matrices satisfy $\mathbf{W}_Q^\top \mathbf{W}_K \neq \mathbf{0}$. Under the operator $\mathcal{T}_k$ ($k>0$), the local computational attention kernel of an APE model differs almost surely: $\Delta A^{APE} = |\tilde{A}_{a, b}^{APE} - A_{a, b}^{APE}| > 0$.
\end{lemma}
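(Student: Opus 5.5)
We write the APE pre-softmax score explicitly and reduce the claim to the non-vanishing of a polynomial in the generic parameters. Under APE the input to the first attention layer at position $t$ is $\mathbf{h}_t = \mathbf{e}_{x_t} + \mathbf{p}_t$. The translation $\mathcal{T}_k$ leaves the token content of $\omega$ unchanged and only shifts indices. Hence
\[
A_{a,b}^{APE} = (\mathbf{e}_{x_a}+\mathbf{p}_a)^\top \mathbf{W}_Q^\top \mathbf{W}_K (\mathbf{e}_{x_b}+\mathbf{p}_b),
\qquad
\tilde{A}_{a,b}^{APE} = (\mathbf{e}_{x_a}+\mathbf{p}_{a+k})^\top \mathbf{W}_Q^\top \mathbf{W}_K (\mathbf{e}_{x_b}+\mathbf{p}_{b+k}),
\]
with the scaling $1/\sqrt{m}$ omitted since it does not affect whether the difference vanishes. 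Write $M := \mathbf{W}_Q^\top \mathbf{W}_K$. The difference $D := \tilde{A}_{a,b}^{APE} - A_{a,b}^{APE}$ expands bilinearly into cross terms in the embeddings $\mathbf{e}$, the position vectors $\mathbf{p}_a,\mathbf{p}_b,\mathbf{p}_{a+k},\mathbf{p}_{b+k}$, and $M$.

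\textbf{Step 1: $D$ is a non-trivial polynomial.} For fixed token embeddings and fixed $M \neq \mathbf{0}$, $D$ is a polynomial in the entries of the absolute embeddings. Since $k>0$, the index $a+k$ differs from $a$, so $\mathbf{p}_{a+k}$ is a free variable independent of $\mathbf{p}_a$ and $\mathbf{p}_b$. It is independent of $\mathbf{p}_b$ unless $a+k=b$, a case handled below.

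To exhibit a nonzero evaluation, fix all other vectors and vary only $\mathbf{p}_{a+k}$. Then $D$ is affine in $\mathbf{p}_{a+k}$ with linear coefficient $M(\mathbf{e}_{x_b}+\mathbf{p}_{b+k})$. Because $M\neq\mathbf{0}$, there is a vector $\mathbf{w}$ with $M\mathbf{w}\neq 0$, and we can choose $\mathbf{p}_{b+k}$ so that $\mathbf{e}_{x_b}+\mathbf{p}_{b+k}=\mathbf{w}$. This makes the linear coefficient nonzero, so $D$ is not the zero polynomial.

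If $a+k=b$, the same argument applies using $\mathbf{p}_{b+k}$ as the free variable. The degenerate case $a=b$ is also covered: $D$ then contains the term $\mathbf{p}_{a+k}^\top M \mathbf{p}_{a+k}$, and a generic choice makes it nonzero. In every case we only need one variable that appears in $\tilde{A}$ and not in $A$.

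\textbf{Step 2: almost-sure conclusion.} The zero set of a non-trivial polynomial on $\mathbb{R}^N$ has Lebesgue measure zero. Interpreting ``generically parameterized'' as absolute continuity with respect to Lebesgue measure of the joint law of the parameters, $D\neq 0$ almost surely, so $\Delta A^{APE}>0$ a.s.

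If $\mathbf{W}_Q,\mathbf{W}_K$ are generic as well, we can treat all parameters jointly. The polynomial $D(\mathbf{W}_Q,\mathbf{W}_K,\mathbf{p},\mathbf{e})$ is non-trivial, because we can evaluate it at any $M\neq 0$ using the construction of Step 1. The measure-zero conclusion therefore holds on the full parameter space.

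\textbf{Main obstacle.} The delicate point is making ``generic'' precise and checking that the non-triviality witness respects any constraints on the parameters. Two constraints in particular need checking.
\begin{itemize}
\item If the $\mathbf{p}_t$ are fixed sinusoidal rather than learned, the free-variable argument fails. In that case we would instead use that $t\mapsto\mathbf{p}_t$ is injective and argue genericity over $M$ alone. This requires $\mathbf{p}_{a+k}\otimes\mathbf{h}_{b+k}$ and $\mathbf{p}_a\otimes\mathbf{h}_b$ to induce distinct linear functionals on $M$, which holds whenever the rank-one matrices differ.
\item The case $k\ge T_{\max}-b$, where shifted indices exceed the embedding table, must be excluded by the definition of $\mathcal{T}_k$ on $\mathcal{Z}$.
\end{itemize}
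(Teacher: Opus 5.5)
Your proposal is correct and follows essentially the same route as the paper. Both expand the two bilinear forms, note that the difference is a nontrivial polynomial in the generic parameters because $\mathbf{W}_Q^\top\mathbf{W}_K\neq\mathbf{0}$, and conclude from the Lebesgue-nullity of its zero set. You are in fact more careful than the paper: it treats $\mathbf{p}_{a+k},\mathbf{p}_{b+k}$ as independent of $\mathbf{p}_a,\mathbf{p}_b$ and so silently ignores the coincidences $a+k=b$ and $a=b$, which you handle explicitly.

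One small caveat concerns the diagonal case $a=b$ with $M$ held fixed. There your witness needs $M+M^\top\neq\mathbf{0}$: if $M$ is skew-symmetric, then $\mathbf{h}^\top M\mathbf{h}\equiv 0$ and the difference vanishes identically. So in that case you should appeal to joint genericity of $\mathbf{W}_Q,\mathbf{W}_K$ (e.g.\ evaluate at $M=\mathbf{I}$) rather than to an arbitrary $M\neq\mathbf{0}$.
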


\begin{proof}[Proof Sketch] 
Expanding the bilinear forms of the APE attention kernel cancels the position-independent terms. The residual divergence $\Delta A^{APE}$ forms a non-trivial multivariate quadratic polynomial. By the properties of real algebraic geometry, the zero-locus of this polynomial defines a proper algebraic variety with Lebesgue measure zero in $\mathbb{R}^{2d}$. Consequently, exact cancellation under random initialization occurs with probability zero. Conversely, Rotary Position Embedding (RoPE) relies on orthogonal rotation matrices, yielding $\Delta A^{RoPE} = 0$. See Appendix \ref{app:pe-results}.
\end{proof}

\begin{assumption}[Autoregressive Sensitivity via Markov Compounding]
\label{assum:ar_sensitivity}
We assume the base search-based reasoning task exhibits an intrinsic global smoothness, yielding a baseline structural Lipschitz constant $\mathcal{O}(1/T_{max})$. Furthermore, generating a formal search trajectory constitutes a zero-tolerance Markov decision process over a directed acyclic graph. We assume that a local attention-kernel coordinate disruption ($\Delta A > 0$) at a critical routing step forces the autoregressive process into an incorrect subtree. Due to exact-match evaluation, this localized divergence irrevocably precludes reaching the target state, compounding the localized error into a global trajectory failure. This bounds the expected structural risk variation below by the uniform error probability: $\Delta \mathcal{L} = |\mathcal{L}_f(\boldsymbol{\chi}') - \mathcal{L}_f(\boldsymbol{\chi})| \ge 1 - 1/|\mathcal{V}| := \gamma > 0$.
\end{assumption}

\begin{theorem}[The Lipschitz Gap]
\label{thm:lipschitz_gap}
Under Assumption \ref{assum:ar_sensitivity}, for a sufficiently large sequence capacity $T_{max}$, the Lipschitz constant of the induced risk function for APE is asymptotically larger than that of RoPE: $K_{f, RoPE} = \mathcal{O}(1/T_{max}) \ll \Omega(1) \le K_{f, APE}$.
\end{theorem}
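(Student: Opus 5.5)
The plan is to bound $K_{f,APE}$ from below and $K_{f,RoPE}$ from above separately. Both bounds are evaluated on pairs of structural points related by the minimal translation operator $\mathcal{T}_k$ of Definition~\ref{def:translation_operator}. The guiding observation is that $K_f$ is a supremum of difference quotients. Hence a single witness pair $(\boldsymbol{\chi},\boldsymbol{\chi}')=(\Phi(z),\Phi(\mathcal{T}_k(z)))$ already gives a lower bound. By Definition~\ref{def:translation_operator}, the denominator of such a quotient is $\delta\le C_\delta$, uniformly in $T_{max}$.

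\textbf{Lower bound for APE.} Fix any $z$ containing a critical routing step inside a sub-trajectory $\omega$, and any $k\ge 1$. Lemma~\ref{lemma:ape_shattering} shows that, for generic parameters with $\mathbf{W}_Q^\top\mathbf{W}_K\neq\mathbf{0}$, the shifted kernel satisfies $\Delta A^{APE}>0$ almost surely. Assumption~\ref{assum:ar_sensitivity} then converts this kernel disruption into a risk jump $|\mathcal{L}_f(\boldsymbol{\chi}')-\mathcal{L}_f(\boldsymbol{\chi})|\ge\gamma=1-1/|\mathcal{V}|$. Consequently
\[
K_{f,APE}\;\ge\;\frac{|\mathcal{L}_f(\boldsymbol{\chi}')-\mathcal{L}_f(\boldsymbol{\chi})|}{d_{\mathcal X}(\boldsymbol{\chi},\boldsymbol{\chi}')}\;\ge\;\frac{\gamma}{C_\delta}=\Omega(1),
\]
and the constant does not depend on $T_{max}$. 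The only care needed here is the requirement $\boldsymbol{\chi}\neq\boldsymbol{\chi}'$. This holds because $\chi_4$ increments by one, so $\delta\ge 1$.

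\textbf{Upper bound for RoPE.} For RoPE, the proof sketch of Lemma~\ref{lemma:ape_shattering} gives $\Delta A^{RoPE}=0$. This is because $\mathbf{R}_{a+k}^\top\mathbf{R}_{b+k}=\mathbf{R}_{b-a}$, so every pre-softmax score inside $\omega$ is unchanged by $\mathcal{T}_k$. The inserted node is structurally inert, so the local computational graph on $\omega$ is identical. The sensitivity mechanism of Assumption~\ref{assum:ar_sensitivity} is therefore never triggered. What remains is the intrinsic variation of the task itself, which by the first clause of Assumption~\ref{assum:ar_sensitivity} has Lipschitz constant $\mathcal{O}(1/T_{max})$. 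I would make this precise by arguing that, modulo translations, $\mathcal{L}_{f,RoPE}$ coincides with the task's baseline risk landscape. This argument would use that $\mathcal{X}$ has diameter $\Theta(T_{max})$, so that a risk variation of at most $1$ spread across this diameter gives slope $\mathcal{O}(1/T_{max})$. The conclusion is $K_{f,RoPE}=\mathcal{O}(1/T_{max})$.

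\textbf{Main obstacle.} The main obstacle is this last step. The supremum defining $K_f$ ranges over \emph{all} pairs in $\mathcal{X}$, not only translation pairs. Invariance under $\mathcal{T}_k$ alone does not control the quotient along other directions. I would first try to reduce the supremum to the baseline smoothness assumption. One route is to decompose an arbitrary displacement into a translation component, which contributes zero risk change for RoPE, and a genuinely structural component, whose risk change is governed by the task's $\mathcal{O}(1/T_{max})$ smoothness. If this decomposition is not clean, I would state the RoPE bound as holding on the translation-generated orbits, which is exactly where the APE lower bound is witnessed. Finally, I would take $T_{max}$ large enough that $\mathcal{O}(1/T_{max})<\gamma/C_\delta$, which yields the claimed strict separation.
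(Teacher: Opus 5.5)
Your lower bound for $K_{f,\mathrm{APE}}$ is exactly the paper's argument. It uses the witness pair $(\Phi(z),\Phi(\mathcal{T}_k(z)))$, Lemma~\ref{lemma:ape_shattering}, the jump $\gamma$ from Assumption~\ref{assum:ar_sensitivity}, and $1\le\delta\le C_\delta$. The gap is in the RoPE half. The paper never derives $K_{f,\mathrm{RoPE}}=\mathcal{O}(1/T_{max})$. In the expanded Assumption~\ref{assum:ar_sensitivity}(i) of Appendix~\ref{app:assumption-smoothness}, the hypothesis itself asserts that RoPE inherits the baseline smoothness, $K_{f_{\mathrm{RoPE}}}\le\mathcal{O}(1/T_{\max})$, and the proof simply cites it. The obstacle you flag is real, and none of your routes overcomes it.

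\begin{itemize}
\item The diameter argument runs the wrong way. For any $g:\mathcal{X}\to[0,1]$ one has $\sup_{\boldsymbol{u}\neq\boldsymbol{v}}|g(\boldsymbol{u})-g(\boldsymbol{v})|/d_{\mathcal{X}}(\boldsymbol{u},\boldsymbol{v})\ge(\sup g-\inf g)/\operatorname{diam}(\mathcal{X})$, so range over diameter is a \emph{lower} bound on a Lipschitz constant. An upper bound must control the quotient at every scale. That includes distance $\approx 1$ (your own translation pairs). It also includes the far smaller gaps between distinct points of the finite set $\Phi(\mathcal{Z})$, which can differ only in averaged coordinates such as $\chi_2,\chi_3,\chi_5,\chi_6,\chi_7$, by amounts of order $1/N$.
\item $\Delta A^{\mathrm{RoPE}}=0$ holds only for query--key pairs that both lie inside $\omega$. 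Since $\mathcal{T}_k$ inserts tokens mid-sequence, every token of $\omega$ also attends to the $k$ new tokens. It also attends to the prefix (including $x_{init}^{(z)}$) at relative offsets enlarged by $k$. Both effects change the softmax normalization and the attention outputs. So the forward pass on $\omega$ is not identical, and you cannot conclude that the sensitivity mechanism is ``never triggered.''
\item The translation/structural decomposition needs exactly the premise that $\mathcal{L}_{f,\mathrm{RoPE}}$ coincides with the task's baseline risk landscape. No kernel identity provides that.
\end{itemize}

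Your fallback of restricting to translation orbits proves a strictly weaker statement than the theorem. It is also unusable downstream: Lemma~\ref{lemma:error_decomposition} and Corollary~\ref{cor:ape_vs_rope} go through Kantorovich--Rubinstein duality, which requires the global Lipschitz constant on $\mathcal{X}$, not one along orbits. The missing step is therefore not a sharper argument. It is the strong reading of Assumption~\ref{assum:ar_sensitivity}(i), in which $K_{f,\mathrm{RoPE}}\le\mathcal{O}(1/T_{max})$ is part of the hypothesis. With that reading, the RoPE half is a one-line citation. Choosing $T_{max}$ large enough that this bound falls below $\gamma/C_\delta$ then finishes the proof as you describe.
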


\begin{proof}[Proof Sketch] 
The Lipschitz constant is bounded below by the evaluation at the specific perturbed pair: $K_{f, APE} \ge |\Delta \mathcal{L}| / \delta \ge \gamma / C_\delta$. Because $\gamma$ and $C_\delta$ are $\Omega(1)$ constants independent of $T_{max}$, it follows that $K_{f, APE} = \Omega(1)$. By preserving relative sequence distances, RoPE maintains the local dependency structure without introducing artificial discontinuities, inheriting the task's intrinsic baseline smoothness $\mathcal{O}(1/T_{max})$.
\end{proof}

\subsection{Capacity Limits: The \texorpdfstring{$\text{TC}^0$}{TC0} vs. \texorpdfstring{$\text{NC}^1$}{NC1} Algorithmic Bottleneck}

Beyond sequence equivariance, graph-traversal dynamics impose distinct computational complexity constraints on the hypothesis space $\mathcal{F}_{L, m, PE}$. 

In shallow, lateral state expansions, maintaining mutually independent states allows the evaluation process to be efficiently parallelized across attention heads in a wide network. Conversely, deep recursive backtracking necessitates a Last-In-First-Out (LIFO) stack mechanism. To execute a backtracking transition, the model must implicitly identify and resolve the most recently suspended, non-adjacent sibling node across a flattened 1D sequence history.

\begin{lemma}[Surjective Monoid Homomorphism to Dyck-$k$]
\label{lemma:dyck_homomorphism}
Let $k$ denote the maximum branching factor of the implicit search tree. Defining the traversal alphabet $\Sigma_{search} = \{ \delta_{down}^{(1)}, \dots, \delta_{down}^{(k)} \} \cup \{ \delta_{up}^{(1)}, \dots, \delta_{up}^{(k)} \} \cup \{ \omega_{eval} \}$, there exists a surjective monoid homomorphism $\psi: \Sigma_{search}^* \to \Sigma_{Dyck}^*$ mapping child expansions to left parentheses $(_c$ and backtracking steps to right parentheses $)_c$. Every valid search trajectory projects to a balanced prefix in $\mathcal{D}_k$, where the maximum tree depth $\chi_1$ maps exactly to the maximum parenthesis nesting depth.
\end{lemma}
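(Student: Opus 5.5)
The plan is to build $\psi$ explicitly on generators and extend it freely, since $\Sigma_{search}^*$ is the free monoid on $\Sigma_{search}$. Any map from the alphabet into $\Sigma_{Dyck}^*$ therefore extends uniquely to a monoid homomorphism. We take $\Sigma_{Dyck} = \{(_1,\dots,(_k, )_1,\dots,)_k\}$ and define
\[
\psi(\delta_{down}^{(c)}) = (_c, \qquad \psi(\delta_{up}^{(c)}) = )_c, \qquad \psi(\omega_{eval}) = \varepsilon,
\]
where $\varepsilon$ is the empty word. The label $c$ on an up-move $\delta_{up}^{(c)}$ is the child index of the node being exited; this convention has to be fixed when the traversal alphabet is read off from a trajectory. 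Homomorphism then holds by construction, $\psi(uv)=\psi(u)\psi(v)$. Surjectivity is immediate as well: every letter of $\Sigma_{Dyck}$ is the image of a single generator, so every word $w_1\cdots w_n \in \Sigma_{Dyck}^*$ is the image of the concatenation of the corresponding preimages.

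For the claim about valid trajectories, I would induct on the length of a valid trajectory prefix $y \in \Sigma_{search}^*$. Validity under the search grammar $\Gamma$ means the current node is always well defined and an up-move is only issued from a non-root node. The invariant to carry is this: if the current node is $v$ at depth $d$, with root-to-$v$ path given by child indices $c_1,\dots,c_d$, then $\psi(y)$ reduces under the Dyck cancellation rule $(_c)_c \to \varepsilon$ to exactly $(_{c_1}(_{c_2}\cdots(_{c_d}$. The inductive steps are as follows.
\begin{itemize}
\item A down-move $\delta_{down}^{(c)}$ appends $(_c$ and pushes $c$ onto the path.
\item An up-move from $v$ has label $c_d$ by the labelling convention. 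It appends $)_{c_d}$, which cancels the last unmatched bracket, and the path pops to the parent.
\item $\omega_{eval}$ changes neither side.
\end{itemize}
Because the reduced word never contains an unmatched right bracket, $\psi(y)$ is a prefix of a word in $\mathcal{D}_k$: it can be completed by appending $)_{c_d}\cdots)_{c_1}$. If the trajectory ends back at the root, $\psi(y)\in\mathcal{D}_k$ itself.

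The depth statement follows from the same invariant. For a balanced prefix, the nesting depth at any position equals the number of unmatched left brackets, which is the length of the reduced word. By the invariant this length equals the tree depth $d$ of the current node at every time step. Taking the maximum over all prefixes gives that the maximum nesting depth equals $\chi_1$, the maximum tree depth. Here $\chi_1$ is read as the depth relative to the root at which the trajectory starts, and Definition~\ref{def:structural_projection} is implicitly measured that way.

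I expect the main obstacle to be the labelling of up-moves, more than any step of the algebra. If a raw trajectory records only ``backtrack'' with no child index, $\psi$ is not well defined on the paper's alphabet as it stands. The fix is to define the label of $\delta_{up}$ from the grammar $\Gamma$, as the index of the child being exited, which is determined by the history. This makes the map from trajectories to $\Sigma_{search}^*$ well defined while leaving $\psi$ itself a letter-to-letter homomorphism. Lateral sibling shifts ($\chi_7$) need a similar convention: I would treat each one as the composite $\delta_{up}^{(c)}\delta_{down}^{(c')}$, which keeps the depth bookkeeping exact. The remaining care is to state clearly that ``balanced prefix'' means prefix-closed membership in $\mathcal{D}_k$, since trajectories may terminate at depth greater than zero.
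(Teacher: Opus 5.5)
Your proposal is correct and takes the paper's route. The paper defines $\psi$ the same way, letter by letter ($\delta_{down}^{(c)}\mapsto (_c$, $\delta_{up}^{(c)}\mapsto )_c$, $\omega_{eval}\mapsto\varepsilon$), and it only asserts, without leaving surjectivity explicit, that valid DFS traversals give balanced prefixes whose maximum nesting depth equals $\chi_1$; your reduced-word induction and your up-move labelling and root-depth conventions supply exactly that missing justification.
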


Parsing Dyck-$k$ for $k \ge 2$ is complete for the complexity class $\text{NC}^1$. However, a Transformer computing $L$ sequential layers under bounded numerical precision belongs to $\text{TC}^0$. Under the standard separation $\text{TC}^0 \subsetneq \text{NC}^1$, a constant-depth circuit cannot evaluate arbitrary $\text{NC}^1$ functions. We formalize this limitation into two independent capacity bounds on the expected risk.

Let $\mathcal{L}^*(\boldsymbol{\chi})
=
\inf_{h}
\mathbb{E}_{z \sim \mu_T}
[\ell(h,z)\mid \Phi(z)=\boldsymbol{\chi}]$
denote the Bayes-optimal structural risk function. 
To relate the $L_1(\nu_T)$ approximation error in 
$\mathcal{H}_{\mathrm{Barron}}$ to the discrete zero-one risk, 
we assume a standard margin condition on the target structural measure $\nu_T$ 
\citep{tsybakov2004optimal}; the formal statement is given in 
Appendix~\ref{app:structural-capacity} 
(Assumption~\ref{ass:tsybakov-margin}).

\begin{lemma}[Width Capacity Bound via Barron Spaces]
\label{lemma:width_capacity}
Assuming $\mathcal{L}^* \in \mathcal{H}_{Barron}$, any autoregressive Transformer hypothesis bounded by an internal representation width $m$ satisfies the lower bound: $\mathcal{R}_{\mu_T}(f) \ge \frac{C_{\mathrm{width}}}{\sqrt{m}} - \epsilon$.
\end{lemma}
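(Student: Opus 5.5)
The plan is to chain three reductions: from the zero-one risk to the structural integral (Assumption~\ref{assum:epsilon_relaxed}), from that integral to an $L_1(\nu_T)$ approximation error against the Bayes-optimal $\mathcal{L}^*$ (the margin condition), and from the approximation error to a width-dependent lower bound (the Makovoz-type lower bound for $m$-term ridge approximation in $\mathcal{H}_{\mathrm{Barron}}$). First, apply Assumption~\ref{assum:epsilon_relaxed} with $\mu=\mu_T$ to obtain
\begin{equation}
\mathcal{R}_{\mu_T}(f) \ge \int_{\mathcal{X}} \mathcal{L}_f \, d\nu_T - \epsilon .
\end{equation}
This accounts for the $-\epsilon$ in the statement. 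It then suffices to show $\int \mathcal{L}_f \, d\nu_T \ge C_{\mathrm{width}}/\sqrt{m}$.

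Second, relate $\mathcal{L}_f$ to $\mathcal{L}^*$. Pointwise, $\mathcal{L}_f(\boldsymbol{\chi}) \ge \mathcal{L}^*(\boldsymbol{\chi})$ up to the same $\epsilon$-type slack, since $\mathcal{L}^*$ is an infimum over hypotheses of the conditional risk. Hence
\begin{equation}
\int \mathcal{L}_f \, d\nu_T = \int \mathcal{L}^* \, d\nu_T + \int (\mathcal{L}_f - \mathcal{L}^*) \, d\nu_T .
\end{equation}
The first term is nonnegative. I would use the Tsybakov margin condition (Assumption~\ref{ass:tsybakov-margin}) to lower bound the excess term by a power of $\|\mathcal{L}_f - \mathcal{L}^*\|_{L_1(\nu_T)}$. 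In the standard form, the excess zero-one risk is at least $c\,\|\hat\eta-\eta\|_{L_1}^{(1+\alpha)/\alpha}$, or linear when the margin exponent is hard, $\alpha=\infty$. I would take the hard-margin case so the rate stays $m^{-1/2}$ and absorb constants into $C_{\mathrm{width}}$.

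Third, argue that any width-$m$ Transformer induces a structural risk function lying, after the final readout, in a class of $m$-term superpositions of ridge-type units on $\mathcal{X}\subset\mathbb{R}^{12}$. Concretely, the last-layer MLP or readout of width $m$ composed with the (measurable) dependence on $\boldsymbol{\chi}$ gives an element of $\Sigma_m = \{\sum_{i=1}^m a_i \sigma(\langle w_i,\cdot\rangle + b_i)\}$. Then invoke the Makovoz/Barron lower bound: there exist Barron-class targets for which $\inf_{g\in\Sigma_m}\|g-\mathcal{L}^*\|_{L_1(\nu_T)} \ge c\, m^{-1/2}$. Read literally, the assumption $\mathcal{L}^*\in\mathcal{H}_{\mathrm{Barron}}$ must be strengthened to say $\mathcal{L}^*$ is a worst-case (non-degenerate) Barron element. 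Combining the three steps gives $\mathcal{R}_{\mu_T}(f)\ge C_{\mathrm{width}}/\sqrt{m}-\epsilon$.

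The main obstacle is the third step, in two respects. Identifying the Transformer-induced $\mathcal{L}_f$ with an element of $\Sigma_m$ is not literally true, since $f$ acts on token sequences and $\mathcal{L}_f$ arises only through Assumption~\ref{assum:epsilon_relaxed}. Also, Makovoz's lower bound is a minimax (worst-case) statement, not a bound for every Barron function; for example, $\mathcal{L}^*$ itself might be a single ridge unit. My first approach would be to state explicitly, as part of the assumption, that $\mathcal{L}_f$ factors through a width-$m$ ridge expansion on $\mathcal{X}$ and that $\mathcal{L}^*$ attains the Makovoz rate, i.e.\ lies outside the $m^{-1/2}$-neighborhood of $\Sigma_m$. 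The margin step is the secondary difficulty, since a soft margin would degrade the exponent.
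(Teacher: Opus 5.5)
Your proposal follows the paper's proof essentially step for step: apply Assumption~\ref{assum:epsilon_relaxed} at $\mu_T$, use the Tsybakov margin condition in the hard-margin limit $\alpha\to\infty$ to get $\int\mathcal{L}_f\,d\nu_T \ge C_{\mathrm{margin}}\int|\mathcal{L}_f-\mathcal{L}^*|\,d\nu_T$, then apply a Makovoz-type width lower bound with $C_{\mathrm{width}}=C_{\mathrm{margin}}C_{\mathcal{B}}$. The two strengthenings you flag as necessary are exactly what the paper builds into Assumption~\ref{ass:width-capacity}: part (ii) posits $\mathcal{L}_f\in\mathcal{H}_m$, and part (iii) directly assumes $\inf_{g\in\mathcal{H}_m}\int|g-\mathcal{L}^*|\,d\nu_T\ge C_{\mathrm{width}}/\sqrt{m}$ rather than deriving it from Makovoz.
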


\begin{proof}[Proof Sketch] 
While the Transformer operates on microscopic token embeddings, correctly routing recursive transitions requires its internal representations to implicitly track global tree states. Thus, the physical Feed-Forward Network (FFN) width $m$ acts as an information-theoretic upper bound on the capacity of $\mathcal{L}_f$. By applying the linearity of the Lebesgue integral, we decompose the risk $\int \mathcal{L}_f = \int (\mathcal{L}_f - \mathcal{L}^*) + \int \mathcal{L}^* \ge \int |\mathcal{L}_f - \mathcal{L}^*|$. By universal approximation limits for Barron spaces, this absolute $L_1$ error decays at best via $\Omega(m^{-1/2})$.
\end{proof}

While the external autoregressive generation linearly expands the global causal graph length, the model must execute implicit heuristic routing evaluations at each discrete step that are not serialized in the output. Resolving non-adjacent sibling dependencies across a flattened history relies entirely on the fixed physical depth $L$ of a single forward pass.

Prior work shows that finite-precision Transformers with bounded depth face limitations on hierarchical formal-language dependencies such as Dyck-style nesting \citep{Hahn_2020,merrill2023parallelismtradeofflimitationslogprecision}. 
Motivated by these limits, we introduce a task-dependent truncation constant $\alpha_{\mathrm{circ}}>0$ and model the recoverable structural depth of a depth-$L$ architecture by the threshold $\chi_1 \le \alpha_{\mathrm{circ}}L$.

\begin{lemma}[Implicit Depth Bottleneck Bound]
\label{lemma:depth_bottleneck}
Defining the Borel measurable indicator function $P_{\mathrm{implicit}}(\boldsymbol{\chi}; L, \alpha_{circ}) = \mathbb{I}_{(\alpha_{circ} L, \infty)}(\chi_1)$, the expected target risk is bounded below by: $\mathcal{R}_{\mu_T}(f) \ge C_{\mathrm{depth}} \int_{\mathcal{X}} P_{\mathrm{implicit}} \, d\nu_T - \epsilon$, where $C_{\mathrm{depth}} = 1 - \frac{1}{|\mathcal{V}|} \in (0, 1)$.
\end{lemma}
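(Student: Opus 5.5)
The plan mirrors the proof of Lemma~\ref{lemma:width_capacity}: first pass from the discrete risk to the structural risk via Assumption~\ref{assum:epsilon_relaxed}, then lower-bound the structural risk pointwise on the region where the truncation threshold is violated. Applying Assumption~\ref{assum:epsilon_relaxed} with $\mu=\mu_T$ gives
\[
\mathcal{R}_{\mu_T}(f) \ge \int_{\mathcal{X}} \mathcal{L}_f(\boldsymbol{\chi})\, d\nu_T(\boldsymbol{\chi}) - \epsilon .
\]
Since $\mathcal{L}_f \ge 0$, we may discard the region $\{\chi_1 \le \alpha_{\mathrm{circ}} L\}$ and write
\[
\int_{\mathcal{X}} \mathcal{L}_f \, d\nu_T \ge \int_{\mathcal{X}} \mathcal{L}_f \, P_{\mathrm{implicit}} \, d\nu_T .
\]
This step needs $P_{\mathrm{implicit}}$ to be Borel measurable, which holds because it is the indicator of a half-space preimage under the continuous coordinate map $\boldsymbol{\chi}\mapsto \chi_1$.

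The substantive step is a pointwise bound $\mathcal{L}_f(\boldsymbol{\chi}) \ge C_{\mathrm{depth}}$ for ($\nu_T$-almost every) $\boldsymbol{\chi}$ with $\chi_1 > \alpha_{\mathrm{circ}} L$. I would argue this in three parts.

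\emph{Dyck reduction.} By Lemma~\ref{lemma:dyck_homomorphism}, any trajectory $z$ with $\Phi(z)_1 = \chi_1$ projects under $\psi$ to a balanced Dyck-$k$ prefix of nesting depth exactly $\chi_1$. Reaching the target state therefore requires correctly resolving at least one backtracking transition $)_c$, whose matching $(_c$ lies at nesting depth exceeding $\alpha_{\mathrm{circ}} L$.

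\emph{Truncation.} By the modelling threshold motivated by the $\text{TC}^0$ limits, a depth-$L$ finite-precision Transformer cannot recover structure beyond depth $\alpha_{\mathrm{circ}} L$ within a single forward pass. At that routing step its prediction of the matching bracket type (equivalently, the next token) is therefore uninformative, and the best it can do is a uniform guess over $\mathcal{V}$. The probability of the correct continuation is then at most $1/|\mathcal{V}|$.

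\emph{Compounding.} By the zero-tolerance exact-match argument of Assumption~\ref{assum:ar_sensitivity}, one wrong routing step irrevocably forces $\ell(f,z)=1$. Conditioning on $\Phi(z)=\boldsymbol{\chi}$, the conditional risk is thus at least $1-1/|\mathcal{V}| = C_{\mathrm{depth}}$. Identifying $\mathcal{L}_f$ with this conditional expected loss, as Assumption~\ref{assum:epsilon_relaxed} licenses up to the $\epsilon$ already accounted for, gives the pointwise bound.

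Combining the three displays then yields
\[
\mathcal{R}_{\mu_T}(f) \ge C_{\mathrm{depth}} \int_{\mathcal{X}} P_{\mathrm{implicit}}\, d\nu_T - \epsilon .
\]
The range $C_{\mathrm{depth}}\in(0,1)$ follows from $|\mathcal{V}|\ge 2$.

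The main obstacle is the truncation step. Strictly speaking, $\text{TC}^0 \subsetneq \text{NC}^1$ is an unproven separation, and it is an asymptotic statement about circuit families, not a per-instance claim that depth beyond $\alpha_{\mathrm{circ}}L$ yields chance-level predictions. I would not try to derive the uniform-guess bound from circuit complexity. Instead I would state explicitly that the threshold $\chi_1 \le \alpha_{\mathrm{circ}}L$ is taken as the operational model of recoverable depth, as the paragraph preceding the lemma does, and that beyond it the routing decision is independent of the relevant history, so the $1/|\mathcal{V}|$ success bound holds exactly as in Assumption~\ref{assum:ar_sensitivity}. A second, minor point is making $\mathcal{L}_f$ genuinely a version of the conditional risk, rather than an arbitrary function satisfying the integral approximation. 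If needed, I would absorb any discrepancy into $\epsilon$.
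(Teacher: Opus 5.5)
Your proposal follows the paper's proof. Apply Assumption~\ref{assum:epsilon_relaxed} to $\mu_T$, discard the region $\chi_1 \le \alpha_{\mathrm{circ}}L$ using $\mathcal{L}_f \ge 0$, and use the pointwise bound $\mathcal{L}_f \ge C_{\mathrm{depth}}$ on the severed region. The paper, like you in the end, simply posits that bound from its implicit-depth modelling assumption rather than deriving it from $\text{TC}^0$ versus $\text{NC}^1$.

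One small refinement: Assumption~\ref{assum:epsilon_relaxed} only controls integrals, so it does not let you identify $\mathcal{L}_f$ pointwise with the conditional risk, and "absorbing the discrepancy into $\epsilon$" does not work. You do not need that identification, though. $\mathcal{R}_{\mu_T}(f)$ is exactly the $\nu_T$-integral of the conditional risk, so arguing at that level gives the bound directly, even without the $-\epsilon$.
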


\section{Synthesis of the Main Bound}
\label{sec:unified_bound}

\begin{theorem}[The Generalization Bound]
\label{thm:unified_bound}
For any autoregressive Transformer $f \in \mathcal{F}_{L, m, \mathrm{PE}}$, the target generalization risk $\mathcal{R}_{\mu_T}(f)$ is bounded below by the maximum of three independent lower bounds, and bounded from above by its source risk and domain shift penalty:

\begin{align}
    \text{(LB)} \quad \mathcal{R}_{\mu_T}(f) &\ge \scalebox{0.81}{$\displaystyle \max \Bigg\{ \frac{C_{\mathrm{width}}}{\sqrt{m}} - \epsilon, \, C_{\mathrm{depth}} \int_{\mathcal{X}} P_{\mathrm{implicit}} \, d\nu_T(\boldsymbol{\chi}) - \epsilon, \, \mathcal{R}_{\mu_S}(f) - K_{f, \mathrm{PE}}\cdot W_1(\nu_S, \nu_T) - 2\epsilon \Bigg\} $} \label{eq:lower_bound} \\
    \text{(UB)} \quad \mathcal{R}_{\mu_T}(f) &\le \mathcal{R}_{\mu_S}(f) + K_{f, \mathrm{PE}} \cdot W_1(\nu_S, \nu_T) + 2\epsilon \label{eq:upper_bound}
\end{align}
\end{theorem}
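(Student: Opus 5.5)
The theorem is an assembly of results already established, so the plan is to derive each of the three lower bounds separately and then take their maximum. Since $\mathcal{R}_{\mu_T}(f)$ dominates each quantity individually, it dominates their maximum. The upper bound comes from the same decomposition read in the opposite direction.

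\textbf{Step 1: the two capacity terms.} The first term inside the max, $C_{\mathrm{width}}/\sqrt{m} - \epsilon$, is exactly the conclusion of Lemma~\ref{lemma:width_capacity}. It holds under the standing assumptions $\mathcal{L}^* \in \mathcal{H}_{\mathrm{Barron}}$ and the margin condition on $\nu_T$. The second term, $C_{\mathrm{depth}} \int_{\mathcal{X}} P_{\mathrm{implicit}}\, d\nu_T - \epsilon$, is exactly Lemma~\ref{lemma:depth_bottleneck}. Both lemmas are stated for an arbitrary $f \in \mathcal{F}_{L,m,\mathrm{PE}}$, so they apply without modification.

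\textbf{Step 2: the transport term and (UB).} Here I invoke Lemma~\ref{lemma:error_decomposition} with $K_f = K_{f,\mathrm{PE}}$, the Lipschitz constant of $\mathcal{L}_f$ for the given positional scheme, which Theorem~\ref{thm:lipschitz_gap} controls. The lemma gives
\[
|\mathcal{R}_{\mu_T}(f) - \mathcal{R}_{\mu_S}(f)| \le K_{f,\mathrm{PE}}\, W_1(\nu_S,\nu_T) + 2\epsilon .
\]
Unfolding the absolute value produces two one-sided inequalities. The lower one, $\mathcal{R}_{\mu_T}(f) \ge \mathcal{R}_{\mu_S}(f) - K_{f,\mathrm{PE}} W_1 - 2\epsilon$, is the third entry of (LB). The upper one is (UB) verbatim.

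\textbf{Step 3: combining.} With three valid lower bounds on the same quantity, $\mathcal{R}_{\mu_T}(f)$ is at least their maximum, which is (LB).

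\textbf{Main obstacle.} The combination step itself is trivial. The real difficulty is checking that the hypotheses of the cited lemmas hold simultaneously for a single $f$ and a single risk surrogate $\mathcal{L}_f$. Concretely:
\begin{itemize}
\item Assumption~\ref{assum:epsilon_relaxed} must hold with the same $\epsilon$ for both $\mu_S$ and $\mu_T$.
\item $\mathcal{L}_f$ must be Lipschitz, i.e.\ $K_{f,\mathrm{PE}} < \infty$. This is automatic here: the measures are finitely supported and $\mathcal{L}_f$ only needs to be Lipschitz on the finite set $\Phi(\mathcal{Z})$, which is always true. It can then be extended to all of $\mathcal{X}$ with the same constant via McShane's extension if needed.
\item The Barron assumption concerns $\mathcal{L}^*$ rather than $\mathcal{L}_f$, so it does not conflict with the Lipschitz requirement.
\end{itemize}
The word ``independent'' in the statement should be read only as meaning that each bound is derived from a separate lemma. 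It does not say anything about the constants being independent of each other. Finally, if one term in the max is negative, it is vacuous but still valid, so no case split is needed.
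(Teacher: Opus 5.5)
Your proposal is correct and matches the paper's proof: cite Lemma~\ref{lemma:width_capacity} and Lemma~\ref{lemma:depth_bottleneck} for the first two terms, unfold the absolute value in Lemma~\ref{lemma:error_decomposition} to get both the third lower bound and (UB), then take the maximum. Your side remarks (Lipschitz continuity on the finite set $\Phi(\mathcal{Z})$, McShane extension) are harmless refinements the paper doesn't need; one correction is that the appendix's width assumption does constrain $\mathcal{L}_f$ as well, requiring $\mathcal{L}_f \in \mathcal{H}_m \subset \mathcal{H}_{\mathrm{Barron}}$, but this doesn't conflict with the Lipschitz requirement.
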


\begin{proof}[Proof Sketch]
Applying Kantorovich duality to the $\epsilon$-relaxed structural risk, and intersecting this with the $\text{TC}^0$ circuit depth lower bound and Barron space approximation bounds, isolates the respective bounds. See Appendix \ref{app:main-results}.
\end{proof}

\begin{corollary}[The Divergent OOD Guarantees of APE vs. RoPE]
\label{cor:ape_vs_rope}
Let $\mathcal{R}_{\mu_S}(f) = \epsilon_{train} \approx 0$ and $W_1(\nu_S, \nu_T) = \Omega(T_{max})$. RoPE preserves equivariance: its expected generalization error degrades proportionally to the normalized shift, $\mathcal{R}_{\mu_T}(f_{RoPE}) \le \epsilon_{train} + C_{smooth} \cdot (W_1 / T_{max}) + 2\epsilon$. Conversely, APE suffers coordinate disruption, yielding an $\Omega(1)$ expected risk: $\mathcal{R}_{\mu_T}(f_{APE}) \ge \gamma \cdot p_{shift} - \epsilon$, precluding any generalization guarantee.
\end{corollary}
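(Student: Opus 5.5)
The corollary has two halves, and I would prove each one separately by specializing Theorem~\ref{thm:unified_bound}.

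\textbf{RoPE half.} Here I use the upper bound (UB). Substitute $\mathcal{R}_{\mu_S}(f_{RoPE})=\epsilon_{train}$ and invoke Theorem~\ref{thm:lipschitz_gap}. That theorem gives $K_{f,RoPE}=\mathcal{O}(1/T_{max})$, so there is a constant $C_{smooth}$, independent of $T_{max}$, with $K_{f,RoPE}\le C_{smooth}/T_{max}$. Plugging this into (UB) gives
\[
\mathcal{R}_{\mu_T}(f_{RoPE})\le \epsilon_{train}+C_{smooth}\,W_1(\nu_S,\nu_T)/T_{max}+2\epsilon.
\]
Since $W_1\le\mathcal{O}(T_{max})$ by boundedness of $\mathcal{X}$, the normalized shift $W_1/T_{max}$ is $\mathcal{O}(1)$. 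So the bound degrades at most linearly in the normalized shift, which is exactly the claimed statement.

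\textbf{APE half.} Here (UB) is useless, because $K_{f,APE}W_1=\Omega(1)\cdot\Omega(T_{max})$ exceeds $1$. I would therefore argue the lower bound directly instead of through the max in (LB). The goal is to compare $\mathcal{R}_{\mu_T}(f_{APE})$ with $\int\mathcal{L}_f\,d\nu_T$ using Assumption~\ref{assum:epsilon_relaxed}, which costs one $\epsilon$.

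\begin{itemize}
\item Define $p_{shift}$ as the $\nu_T$-mass of structural points $\boldsymbol{\chi}'$ that arise as images $\Phi(\mathcal{T}_k(z))$ of translated source trajectories. In other words, these are target trajectories in which some critical routing sub-trajectory $\omega$ sits at an absolute offset not seen under $\mu_S$.
\item The hypothesis $W_1=\Omega(T_{max})$ is what guarantees that $p_{shift}$ is bounded away from zero. A coupling moving $\Omega(T_{max})$ mass-distance must displace a constant fraction of mass by $\Omega(T_{max})$, because $\operatorname{diam}\mathcal{X}=\Theta(T_{max})$. This forces a constant fraction of target trajectories to sit at genuinely shifted positions.
\item On this set, Lemma~\ref{lemma:ape_shattering} gives $\Delta A^{APE}>0$ almost surely. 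Assumption~\ref{assum:ar_sensitivity} then gives $|\mathcal{L}_f(\boldsymbol{\chi}')-\mathcal{L}_f(\boldsymbol{\chi})|\ge\gamma$ relative to the matched source point.
\item The matched source point has near-zero risk, since $\epsilon_{train}\approx0$ and $\mathcal{L}_f\ge0$. Hence $\mathcal{L}_f(\boldsymbol{\chi}')\ge\gamma-\mathcal{O}(\epsilon_{train})$ there.
\end{itemize}

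Integrating with nonnegativity of $\mathcal{L}_f$ elsewhere gives $\int\mathcal{L}_f\,d\nu_T\ge\gamma\,p_{shift}$. Applying Assumption~\ref{assum:epsilon_relaxed} then yields $\mathcal{R}_{\mu_T}(f_{APE})\ge\gamma p_{shift}-\epsilon$. Finally, Theorem~\ref{thm:lipschitz_gap} shows this is consistent with $K_{f,APE}=\Omega(1)$.

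\textbf{Main obstacle.} The hardest step is justifying that $p_{shift}=\Omega(1)$ and that the $\gamma$-jump lands upward, on the target side. Assumption~\ref{assum:ar_sensitivity} only bounds $|\Delta\mathcal{L}|$, so I need low source risk to fix the sign. I would first try to take $p_{shift}$ as an explicit parameter of the statement, as the corollary's phrasing suggests, and only then remark how $W_1=\Omega(T_{max})$ makes it nonvanishing via the diameter argument above.
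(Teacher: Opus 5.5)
Your RoPE half is correct and follows the paper's argument. (UB) is just Lemma~\ref{lemma:error_decomposition}, and writing $K_{f,RoPE}\le C_{smooth}/T_{max}$ gives the main-text form; the appendix proof instead absorbs $K_{f,RoPE}W_1$ into a constant using $W_1\le\operatorname{diam}\mathcal{X}$.

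The APE half has two genuine gaps, both in facts you try to derive rather than assume. \textbf{First}, the claim that the matched source point has near-zero risk does not follow. $\mathcal{R}_{\mu_S}(f)=\epsilon_{train}$ together with Assumption~\ref{assum:epsilon_relaxed} only gives the averaged bound $\int\mathcal{L}_f\,d\nu_S\le\epsilon_{train}+\epsilon$. That bound allows $\mathcal{L}_f=1$ at source points of small $\nu_S$-mass. Assumption~\ref{assum:ar_sensitivity} only controls $|\Delta\mathcal{L}|$, so at such points the jump of size $\gamma$ may go the wrong way. Hence $\mathcal{L}_f(\boldsymbol{\chi}')\ge\gamma-\mathcal{O}(\epsilon_{train})$ is not justified pointwise. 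You also silently drop that correction when integrating. \textbf{Second}, $W_1=\Omega(T_{max})$ cannot force $p_{shift}=\Omega(1)$. Your reverse-Markov step is right: a constant fraction of coupled mass travels $\Omega(T_{max})$. But by Definition~\ref{def:translation_operator} a translation moves a point by only $\delta\le C_\delta=\mathcal{O}(1)$, so every image $\Phi(\mathcal{T}_k z)$ of a source trajectory lies within $C_\delta$ of $\operatorname{supp}\nu_S$. A target measure supported at distance $\Omega(T_{max})$ from $\operatorname{supp}\nu_S$ (e.g.\ with much larger search volume $\chi_4$) satisfies the hypothesis yet has $p_{shift}=0$ under your definition.

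The paper derives neither fact. In its appendix restatement both are hypotheses: there is a measurable $\mathcal{X}_{shift}$ with $\nu_T(\mathcal{X}_{shift})=p_{shift}>0$ and $\mathcal{L}_{f_{APE}}\ge\gamma$ on $\mathcal{X}_{shift}$. The proof is then only your final step: apply Assumption~\ref{assum:epsilon_relaxed}, restrict the integral to $\mathcal{X}_{shift}$, and use nonnegativity of $\mathcal{L}_f$. So you must take the pointwise lower bound, not just $p_{shift}$, as part of the hypothesis. If you want to keep your source-matching idea, state the matching explicitly, e.g.\ $\mu_T\ge(\mathcal{T}_k)_\#(\mu_S|_B)$ with $\mu_S(B)=p_{shift}$ and each $z\in B$ containing a critical routing step. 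Because $\mathcal{L}_f\ge0$ and $|\Delta\mathcal{L}|\ge\gamma$, the pointwise inequality $\mathcal{L}_f(\Phi(\mathcal{T}_k z))\ge\gamma-\mathcal{L}_f(\Phi(z))$ holds. Integrating it over $B$ gives $\mathcal{R}_{\mu_T}(f_{APE})\ge\gamma p_{shift}-\epsilon_{train}-2\epsilon$, which is slightly weaker than the stated $\gamma p_{shift}-\epsilon$.
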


\begin{corollary}[The ``Deep is Better'' Law for Search Reasoning]
\label{cor:deep_is_better}
Let the target measure $\nu_T$ place probability mass on trajectories satisfying the tree depth threshold $\chi_1 > \alpha_{circ} L$. Scaling the discrete physical depth $L \ge \alpha_{circ}^{-1} \sup_{\boldsymbol{\chi}} \chi_1$ explicitly eliminates the $\Omega(1)$ lower bound, establishing a dominant scaling trajectory over the representation width $m$, which inherently plateaus as $\lim_{m \to \infty} (C_{\mathrm{width}} / \sqrt{m} - \epsilon) = -\epsilon \le 0$.
\end{corollary}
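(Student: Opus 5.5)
The plan is to read Corollary~\ref{cor:deep_is_better} off the lower bound (LB) of Theorem~\ref{thm:unified_bound}. I will treat the depth term and the width term separately. For each I will show how it behaves as $L$, respectively $m$, is scaled. The claimed dominance then follows from comparing the two limits.

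\emph{Step 1 (depth term).} By Lemma~\ref{lemma:depth_bottleneck}, the depth contribution is $C_{\mathrm{depth}}\,\nu_T(\{\boldsymbol{\chi}:\chi_1>\alpha_{circ}L\})-\epsilon$, with $C_{\mathrm{depth}}=1-1/|\mathcal{V}|$.
\begin{itemize}
\item If $\nu_T$ places positive mass $p>0$ on $\{\chi_1>\alpha_{circ}L\}$, this term is at least $C_{\mathrm{depth}}\,p-\epsilon$. This is an $\Omega(1)$ quantity independent of $m$, so (LB) gives $\mathcal{R}_{\mu_T}(f)=\Omega(1)$ no matter how large the width is.
\item If instead $L\ge \alpha_{circ}^{-1}\sup_{\boldsymbol{\chi}}\chi_1$, then every $\boldsymbol{\chi}$ in the support satisfies $\chi_1\le \sup\chi_1\le \alpha_{circ}L$. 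Hence $P_{\mathrm{implicit}}\equiv 0$ $\nu_T$-a.e., and the integral vanishes.
\end{itemize}
The supremum is finite because $\mathcal{X}$ is bounded for finite $T_{max}$ (indeed $\chi_1\le T_{max}$). So such an $L$ exists, and the depth term reduces to $-\epsilon\le 0$. This removes that component of the lower bound.

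\emph{Step 2 (width term).} By Lemma~\ref{lemma:width_capacity}, the width term is $C_{\mathrm{width}}/\sqrt{m}-\epsilon$. This is monotonically decreasing in $m$, and elementary limits give $\lim_{m\to\infty}(C_{\mathrm{width}}/\sqrt m-\epsilon)=-\epsilon\le 0$. Increasing $m$ only shrinks this single term. It leaves the depth term untouched, since that term depends on $L$ and $\nu_T$ but not on $m$. Therefore, for fixed $L$ with $\nu_T(\chi_1>\alpha_{circ}L)>0$, the maximum in (LB) stays at least $C_{\mathrm{depth}}p-\epsilon$ for every $m$. Width scaling thus plateaus at an $\Omega(1)$ floor, whereas depth scaling as in Step 1 removes that floor.

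\emph{Step 3 (dominance) and the main obstacle.} Combining the two steps:
\begin{itemize}
\item along the width axis, $\inf_m \text{(LB)}\ge C_{\mathrm{depth}}p-\epsilon$;
\item along the depth axis, once $L\ge\alpha_{circ}^{-1}\sup\chi_1$, the two capacity terms of (LB) are both at most $\max\{C_{\mathrm{width}}/\sqrt m,0\}-\epsilon$, and this tends to $-\epsilon$ as $m$ grows.
\end{itemize}
This is exactly the claimed dominant trajectory.

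The delicate point is what ``eliminates the lower bound'' means. The lower bound only certifies that risk is not forced to be large; it gives no upper-bound guarantee. I would state the result in that sense. One caveat remains: the third, transport term of (LB) is unaffected by these choices. I would note that it is controlled separately via (UB) and Theorem~\ref{thm:lipschitz_gap}, rather than claim depth alone drives the risk to zero.
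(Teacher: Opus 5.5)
Your proposal is correct and follows essentially the same route as the paper's proof. You read the depth term off Lemma~\ref{lemma:depth_bottleneck}, identify its integral with $\nu_T(\{\boldsymbol{\chi}:\chi_1>\alpha_{circ}L\})$, note that this mass vanishes once $L\ge\alpha_{circ}^{-1}\sup\chi_1$ over the support of $\nu_T$, and take the elementary limit $C_{\mathrm{width}}/\sqrt{m}-\epsilon\to-\epsilon$. Your extra $\inf_m$ comparison and your caveat that the transport term of (LB) is untouched are reasonable sharpenings the paper leaves implicit; calling $C_{\mathrm{depth}}p-\epsilon$ an $\Omega(1)$ floor tacitly requires $\epsilon<C_{\mathrm{depth}}p$, exactly as in the paper.
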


\section{Empirical Verification}
\label{sec:experiments}

To empirically validate Theorem \ref{thm:unified_bound}, we evaluate how architectures navigate domain shifts under controlled configurations by decoupling layer depth $L$, width $m$, and positional encodings, while holding total parameter counts asymptotically constant per scale. Motivated by depth-separation theorems \citep{eldan2016powerdepthfeedforwardneural, levine2021depthtowidthinterplayselfattention} and $\text{TC}^0$ bounds \citep{merrill2023parallelismtradeofflimitationslogprecision}, we systematically design a grid of 54 Transformer architectures crossing three parameter scales (Small, Medium, Large) and three aspect ratios (Deep, Wide, Balanced).

To evaluate the Lipschitz constant $K_{f, PE}$, we cross every configuration with APE and RoPE \citep{su2023roformerenhancedtransformerrotary}. While empirical literature highlights their role in length extrapolation, our framework casts them more fundamentally: APE establishes a rigid absolute coordinate system vulnerable to disruption, whereas RoPE injects relative equivariance to preserve continuous structural regularities. Table \ref{tab:models} summarizes these models.

To instantiate the grammar $\Gamma$ and probability measures ($\mu_S, \mu_T$), we utilize the Stream of Search (SoS) framework \citep{gandhi2024streamsearchsoslearning} as a discrete trajectory simulator. While our theoretical bounds govern general autoregressive reasoning, utilizing SoS allows us to strictly enforce the target distributions required for empirical validation, providing a rigorous testbed that satisfies our axiomatic requirements.
We synthesize 500,000 ground-truth trajectories on a combinatorial planning task (the Countdown game). By manipulating the simulator's heuristic evaluation strategies, we enforce distinct distributions. The $\mu_{BFS}$ measure is synthesized via queue-based state expansions, generating shallow, lateral traversal traces. The $\mu_{DFS}$ measure is synthesized via recursive stack expansions, forcing deep backtracking when trajectories encounter dead-ends. The $\mu_{MIXED}$ distribution is constructed as an exact $0.5 \mu_{BFS} + 0.5 \mu_{DFS}$ mixture to test architectural plasticity against extreme coordinate shifts. Given the unforgiving exact-match nature of formal combinatorial planning, the zero-one loss establishes a rigorous baseline where partial memorization yields zero reward. 

\begin{table}[t]
    \centering
    \caption{Summary of the 54 Transformer Architectural Configurations Evaluated}
    \label{tab:models}
    \resizebox{\columnwidth}{!}{
    \begin{tabular}{l c c c | r r}
        \toprule
        \textbf{Scale / Shape} & \textbf{Layers ($L$)} & \textbf{Width ($d$)} & \textbf{Heads ($h$)} & \textbf{Core Params$^\dagger$} & \textbf{Total Params$^\ddagger$} \\
        \midrule
        Small Deep & 12 & 192 & 4 & 5.3 M & 24.8 M \\
        Small Balanced & 6 & 256 & 4 & 4.7 M & 30.7 M \\
        Small Wide & 3 & 384 & 4 & 5.3 M & 44.3 M \\
        \midrule
        Medium Deep & 24 & 512 & 8 & 75.5 M & 127.6 M \\
        Medium Balanced & 12 & 768 & 12 & 84.9 M & 163.0 M \\
        Medium Wide & 6 & 1024 & 16 & 75.5 M & 179.5 M \\
        \midrule
        Large Deep & 48 & 768 & 12 & 339.7 M & 418.2 M \\
        Large Balanced$^*$ & 24 & 1024 & 16 & 301.9 M & 405.8 M \\
        Large Wide & 12 & 1536 & 24 & 339.7 M & 495.9 M \\
        \bottomrule
        \multicolumn{6}{l}{\small \textit{Note: All configurations crossed with PE $\in \{\text{APE, RoPE}\}$ and Training Data $\in \{\text{BFS, DFS, Mixed}\}$.}} \\
        \multicolumn{6}{l}{\small $^\dagger$\textit{Core Params = $12 L d^2$, isolating the architectural capacity independent of vocabulary.}} \\
        \multicolumn{6}{l}{\small $^\ddagger$\textit{Total Params includes embeddings ($V=50257$) without weight tying.}} \\
        \multicolumn{6}{l}{\small $^*$\textit{Adjusted $d=1024$ to match computational capacity of its scale group.}}
    \end{tabular}
    }
    \vspace{-0.8cm} 
\end{table}

\begin{figure}[t]
    \centering
    \includegraphics[width=\textwidth]{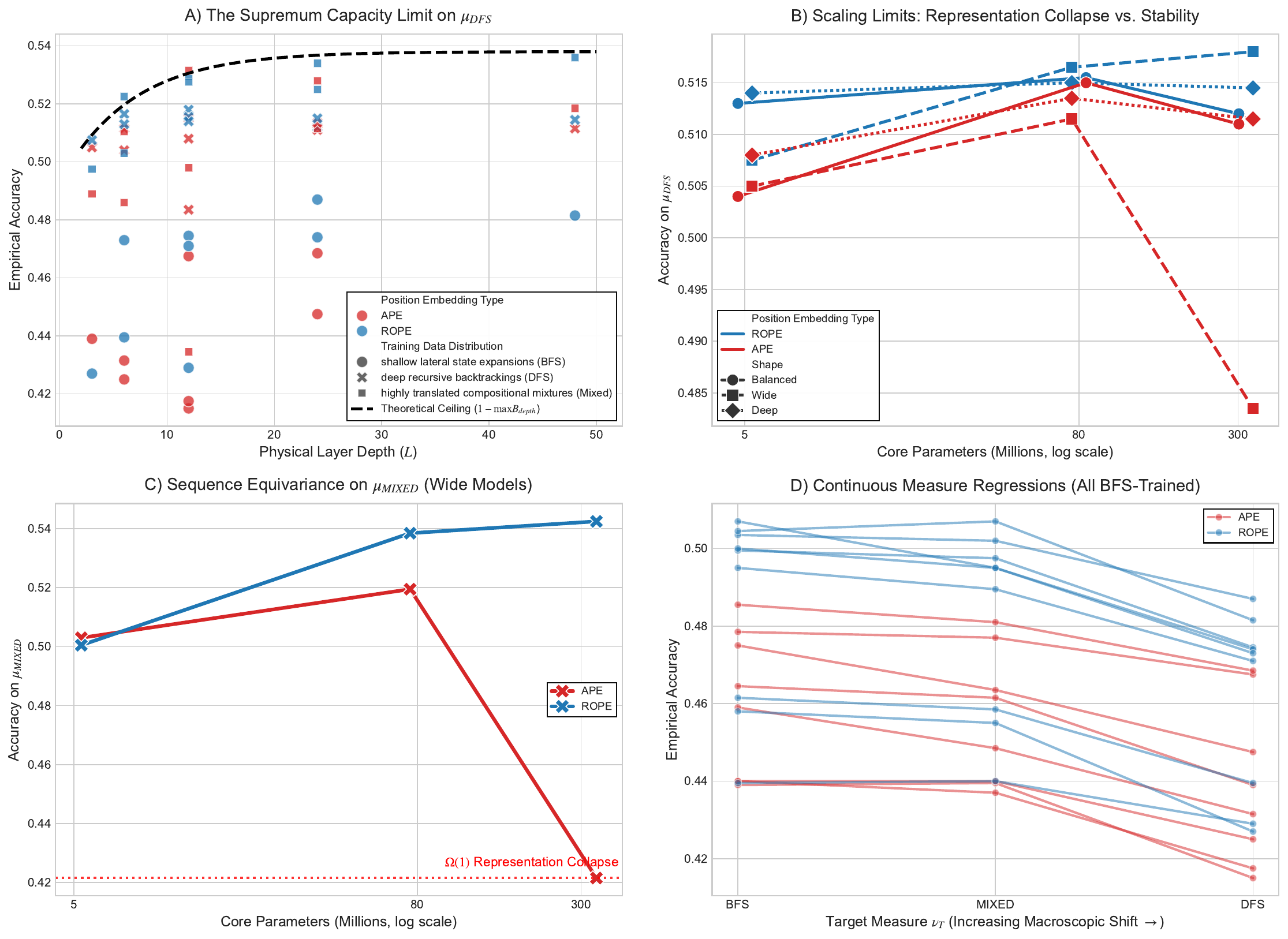}
    \caption{Comprehensive validation of capacity bounds. (A) Supremum capacity limit on $\mu_{DFS}$, showing empirical accuracy enveloped by the derived upper bound curve. This theoretical ceiling detailed in Appendix\ref{app:empirical_calibration} (B) Scaling limits, illustrating that scaling representation width inherently plateaus, whereas scaling physical depth establishes a dominant scaling trajectory. (C) Sequence equivariance on $\mu_{MIXED}$, where scaling wide APE architectures triggers representation collapse, while RoPE acts as a regularizer. (D) Continuous measure regressions, demonstrating that generalization accuracy cascades monotonically downward as the target measure shifts structurally toward $\mu_{DFS}$, empirically corroborating the continuous measure regression bounds.}
    \vspace{-0.8cm} 
    \label{fig:analytical_grid}
\end{figure}

\textbf{Structural Optimal Transport Estimation.}
To operationalize the Kantorovich bounds, we estimate the Wasserstein-1 distance between structural pushforward measures using entropically regularized optimal transport, solved via the Sinkhorn-Knopp algorithm with regularization parameter $\lambda=0.1$ \citep{cuturi2013sinkhorndistanceslightspeedcomputation}. 
Utilizing $N=2,000$ uniformly sampled trajectories per distribution, we construct empirical structural measures on $\mathcal{X}$. Following global standardization and cost-matrix supremum normalization, we report the regularized transport costs:
$\widehat{W}_{1,\lambda}(\hat{\nu}_{BFS},\hat{\nu}_{MIXED}) \approx 0.42$ and
$\widehat{W}_{1,\lambda}(\hat{\nu}_{BFS},\hat{\nu}_{DFS}) \approx 0.81$. 
Exact estimation procedures, cost-matrix construction, normalization details, and robustness checks across sample sizes are reported in Appendix~\ref{app:w1_estimation}. Note: Since empirical Wasserstein estimation in $\mathbb{R}^{12}$ is subject to sample complexity bounds, the reported $\widehat{W}_{1,\lambda}$ values serve as comparative regularized proxies for the true structural shifts rather than exact analytic distances.

\subsection{Positional Encodings as Structural Regularizers}

To empirically validate our insight regarding positional encodings as regularizers, we examine the Lipschitz gap established in Theorem \ref{thm:lipschitz_gap}. As predicted by our bounds, APE yields trivial generalization limits under sequence translations. When trained on the $\mu_{MIXED}$ distribution, the model continuously alternates between shallow and deep traversals, subjecting the attention kernel to large-scale absolute sequence index shifts. Here, the coordinate disruption proven in Lemma \ref{lemma:ape_shattering} manifests precisely: the large wide mixed APE architecture exhibits performance degradation, achieving a mere $0.4215$ accuracy on the Mixed test set. 

In contrast, preserving local sequence equivariance allows the large wide mixed RoPE model to sustain $0.5425$---an absolute recovery of over $12\%$.
As detailed in Figure \ref{fig:analytical_grid}D,
visualizing the quantitative linear regression between the Wasserstein distance $W_1$ and empirical accuracy drops corroborates the diverging slopes. This confirms that $K_{f, APE}$ is asymptotically larger than $K_{f, RoPE}$ under severe domain shifts, empirically verifying that RoPE acts not merely as a length extrapolator, but as a regularizer over the continuous metric space.

\subsection{The Anisotropy of Parameter Scaling and \texorpdfstring{$\text{TC}^0$}{TC0} Bottlenecks}

Corroborating our insight on the anisotropy of parameter scaling, the empirical data confirms that overcoming the $\text{TC}^0$ complexity bottleneck requires increasing the physical layer depth $L$ (Lemma \ref{lemma:depth_bottleneck}), while widening the representation dimension $m$ inherently plateaus within Barron spaces (Lemma \ref{lemma:width_capacity}). Without relative equivariance, this depth boundary is rigid. Comparing the failed large wide mixed APE architecture ($0.4215$) with its depth-scaled counterpart, the large deep mixed APE architecture ($0.533$), we observe an $11.15\%$ absolute increase. While the total parameter count remains asymptotically identical between these two configurations, the structural transition from a parallel wide computational graph to a sequential deep one satisfies the implicit depth threshold derived for algorithmic backtracking.
Crucially, the inherent smoothness provided by RoPE regularizes the optimization landscape, preventing premature representation collapse.
As shown in the extended analyses in Figure \ref{fig:analytical_grid}B, deep architectures no longer unilaterally dominate wide architectures when equipped with RoPE. For example, the large wide DFS RoPE configuration achieves $0.518$ compared to the large deep DFS RoPE at $0.5145$. This indicates that strong translation equivariance 
allows the architecture to gracefully approach its true theoretical capacity without the catastrophic coordinate divergence that otherwise traps APE models,
demonstrating the intersection between Lipschitz smoothness and structural capacity

\subsection{Continuous Measure Regressions and the Supremum Bound}

Finally, we validate the unified bound (Theorem \ref{thm:unified_bound}), which implies that generalization is governed by a supremum operator over architectural bottlenecks and optimal transport costs. As plotted in Figure \ref{fig:analytical_grid}A, the empirical accuracy of all evaluated models is capped by the derived $\text{TC}^0$ upper bound ceiling, confirming that no magnitude of width scaling can bypass the depth limit.

\section{Related Work}
\label{sec:related_work}

\textbf{Circuit Complexity and Transformer Expressivity.} 
Bounded-precision Transformers are restricted to $\text{TC}^0$, inherently struggling with sequential $\text{NC}^1$ tasks like Dyck languages \citep{Hahn_2020, merrill2023parallelismtradeofflimitationslogprecision}. Furthermore, depth-separation theorems prove bounded depth cannot be efficiently compensated by width \citep{eldan2016powerdepthfeedforwardneural, levine2021depthtowidthinterplayselfattention}. While these foundational works yield distribution-agnostic worst-case limits, our framework integrates $\text{TC}^0$ bottlenecks with optimal transport and Barron space bounds \citep{Barron1993, MAKOVOZ199698}, mapping discrete expressivity directly to expected risk under specific data measures.

\textbf{Positional Encodings as Structural Regularizers.} 
Positional encodings govern OOD length extrapolation \citep{zhao2024lengthextrapolationtransformerssurvey}, evolving from absolute injections (APE) to relative mechanisms like ALiBi \citep{press2022trainshorttestlong} and NoPE \citep{kazemnejad2023impactpositionalencodinglength}. Advanced variants, including fractional interpolation (FIRE) \citep{li2024functionalinterpolationrelativepositions}, hierarchical encodings (BiPE) \citep{he2024stoneshitbirdbilevel}, and data-adaptive schemes (DAPE) \citep{zheng2024dapedataadaptivepositionalencoding}, demonstrate superior empirical generalization. Moving beyond decaying attention theories, we elevate shift-invariant encodings to continuous structural regularizers. We select RoPE \citep{su2023roformerenhancedtransformerrotary} as the canonical mathematical representative for this class. Because advanced relative encodings inherently share the translation-equivariant algebraic signature $\Delta A = 0$ under shifts, they inherit our generalized Lipschitz bound. Thus, we theoretically quantify why APE incurs an $\Omega(1)$ risk under structural shifts, whereas equivariant priors rigorously bound the Kantorovich penalty across diverse reasoning tasks.

\textbf{Optimal Transport and Reasoning Domain Shifts.} 
Recent analyses conceptualize reasoning as search tree traversals, showing generalization is bounded by train-test distributional discrepancies \citep{feng2023revealingmysterychainthought, gandhi2024streamsearchsoslearning, prystawski2023thinkstepstepreasoning, zhao2026chainofthoughtreasoningllmsmirage}. However, they generally treat neural architectures as black boxes. To close this loop, we draw fundamental inspiration from optimal transport (OT) domain adaptation, which bounds target risk via Wasserstein distances over static Euclidean features \citep{courty2016optimaltransportdomainadaptation, redko2017theoreticalanalysisdomainadaptation}. We extend this classic OT foundation to \textit{autoregressive sequence generation}. By designing a pushforward operator $\Phi$ to project variable-length trajectories into a continuous metric space, we explicitly couple Kantorovich dual bounds to Transformer circuit complexity and attention kernels, providing the first unified bound bridging continuous domain shifts with discrete architectural limits.

\section{Conclusion and Limitations}
\label{sec:conclusion}

We introduced a measure-theoretic framework quantifying the structural generalization of LLMs engaged in formal search-based reasoning.
By mapping discrete reasoning trajectories into a continuous metric space, we isolated the OOD generalization gap into manageable components via Kantorovich duality. We mathematically proved and empirically validated two architectural laws: (1) APE suffers coordinate disruptions under sequence translations, resulting in an $\Omega(1)$ Lipschitz constant and an $\Omega(1)$ expected risk, whereas RoPE guarantees relative equivariance; and (2) fixed-depth Transformers under bounded numerical precision are limited by a $\text{TC}^0$ bottleneck when evaluating Dyck-$k$ recursive backtracking. Overcoming this requires scaling physical layer depth ($L$), a constraint that representation width ($m$) cannot bypass. 

\textbf{Limitations and Future Work.} Our framework relies on the $\epsilon$-relaxed structural sufficiency; open-ended natural language reasoning may harbor higher approximation errors than formal combinatorial searches. Furthermore, our $\text{TC}^0$ bottleneck governs the forward-pass routing of a single autoregressive trajectory. How advanced inference-time interventions---such as Tree-of-Thoughts or Monte Carlo Tree Search---dynamically reshape the target measure $\nu_T$ remains an open theoretical question.

\textbf{Broader Impact.} As the field pivots toward inference-time scaling and search-based LLMs, our framework dictates that translation-equivariant architectures (like RoPE) are not merely beneficial, but act as a necessary prerequisite for stable inference-time scaling due to the extreme sequence translations ($T_{max} \to \infty$) involved. By highlighting the necessity of depth and relative equivariance, our framework provides a principled foundation for designing more efficient reasoning architectures.

\section*{Acknowledgements}

This research was enabled in part by compute resources, software, and technical assistance provided by Mila (Quebec AI Institute).

\section*{Author Contributions}

\textbf{Yuyang Zhang} conceptualized the measure-theoretic framework, developed the core mathematical proofs, engineered the training and deployment pipeline for all Transformer configurations, executed all experiments, and led the manuscript drafting. 
\textbf{Yifu Zhang} verified the mathematical derivations in the appendices and contributed to manuscript proofreading and refinement. 
\textbf{Xuehai Zhou} managed the codebase repository, and handled experimental version control. 
\textbf{Xiaoyin Chen} supervised the project, provided computational resources. Provided guidance on the experimental design, and mentored the manuscript writing and structuring process.

\bibliographystyle{plainnat}
\bibliography{reference}   


\appendix

\section{Formal Setup and Notation}
\label{app:setup}

This section collects the formal objects used throughout the appendix and fixes notation for the structural measure-theoretic formulation of search-based reasoning.

\subsection{Trajectory Space and Structural Feature Space}
\label{app:trajectory-space}

Let $\mathcal{V}$ be a finite vocabulary and let $T_{\max}\in\mathbb N_+$ denote the maximum physical sequence length. The ambient space of bounded-length sequences is
\begin{equation}
\mathcal{Z}_{\mathrm{all}} := \bigcup_{t=1}^{T_{\max}} \mathcal{V}^{t}.
\end{equation}
Let $\Gamma$ denote the autoregressive grammar of valid search trajectories. The trajectory space is
\begin{equation}
\mathcal{Z} := \{ z \in \mathcal{Z}_{\mathrm{all}} : z \text{ satisfies } \Gamma \}.
\end{equation}
Since $|\mathcal{V}| < \infty$ and $T_{\max} < \infty$, the set $\mathcal{Z}$ is finite. We equip $\mathcal{Z}$ with the discrete $\sigma$-algebra
\begin{equation}
\Sigma_{\mathcal{Z}} := 2^{\mathcal{Z}}.
\end{equation}
Since $\mathcal Z$ is finite and is equipped with the discrete $\sigma$-algebra, any map from $\mathcal Z$ into a measurable space is measurable. We consider source and target probability measures
\begin{equation}
\mu_S, \mu_T \in \mathcal{P}(\mathcal{Z}),
\end{equation}
where $\mathcal{P}(\mathcal{Z})$ denotes the set of probability measures on $(\mathcal{Z}, \Sigma_{\mathcal{Z}})$.

Each trajectory $z\in\mathcal Z$ is written as
\begin{equation}
z = x_{\mathrm{init}}^{(z)} \oplus y_{\mathrm{search}},
\end{equation}
where $x_{\mathrm{init}}^{(z)}\in\mathcal V^k$ is the initial problem state and $y_{\mathrm{search}}\in\mathcal V^{t-k}$ is the generated search trace.

We next define the structural feature space. Let $\mathcal{X} \subset \mathbb{R}^{12}$ be the structural state space, equipped with the Euclidean metric
\begin{equation}
d_{\mathcal{X}}(\mathbf{u}, \mathbf{v}) := \|\mathbf{u} - \mathbf{v}\|_2.
\end{equation}

We equip $\mathcal{X}$ with the Borel $\sigma$-algebra $\mathcal{B}(\mathcal{X})$ induced by this metric. Because all trajectories have bounded physical length, each structural coordinate is bounded. We therefore take $\mathcal X$ to be a bounded Borel subset of $\mathbb R^{12}$ containing $\Phi(\mathcal Z)$; when compactness is needed, we replace $\mathcal X$ by its closure, which does not change any pushforward measure induced by $\Phi$ defined in \ref{app:pushforward}.

We give the coordinate-level construction of the structural projection
\begin{equation}
\Phi : \mathcal{Z} \to \mathcal{X}, 
\qquad
z \mapsto \boldsymbol{\chi}(z) =
[\chi_1(z),\dots,\chi_{12}(z)]^\top,
\end{equation}
which maps a variable-length discrete trajectory to a fixed-dimensional structural representation.

Let $N \in \mathbb{N}_{\ge 1}$ denote the number of explicitly generated nodes in $z$. For the $i$-th node, let $\mathrm{id}_i$ denote its sequence identifier and let
\begin{equation}
D_i := |\mathrm{id}_i|
\end{equation}
be its structural tree depth. Let $C(z,\omega)$ denote the number of exact non-overlapping occurrences of a substring $\omega$ in $z$.

The vector $\boldsymbol{\chi}(z)$ is defined through three groups of statistics. These features summarize the macroscopic structure of a trajectory.

\paragraph{Tree-structural and volumetric statistics.}
\begin{align}
\chi_1(z) &= \max_{1 \le i \le N} D_i 
&& \text{(maximum depth)}, \\
\chi_2(z) &= \frac{1}{N}\sum_{i=1}^{N} D_i 
&& \text{(mean depth)}, \\
\chi_3(z) &= \left( \frac{1}{N}\sum_{i=1}^{N}(D_i-\chi_2)^2 \right)^{1/2}
&& \text{(depth standard deviation)}, \\
\chi_4(z) &= N
&& \text{(search volume)}.
\end{align}

\paragraph{Dynamical transition statistics.}
Let $M := \max(1, N-1)$. We define
\begin{align}
\chi_5(z) &= \frac{1}{M}\sum_{i=1}^{N-1}\mathbb{I}\{D_{i+1} > D_i\}
&& \text{(deepening rate)}, \\
\chi_6(z) &= \frac{1}{M}\sum_{i=1}^{N-1}\mathbb{I}\{D_{i+1} < D_i\}
&& \text{(backtracking rate)}, \\
\chi_7(z) &= \frac{1}{M}\sum_{i=1}^{N-1}\mathbb{I}\{D_{i+1} = D_i\}
&& \text{(lateral transition rate)}.
\end{align}

\paragraph{Pruning and outcome statistics.}
\begin{align}
\chi_8(z) &= C(z,\texttt{"No Solution"})
&& \text{(prune count)}, \\
\chi_9(z) &= \chi_8/N
&& \text{(pruning rate)}, \\
\chi_{10}(z) &= \mathbb{I}\{C(z,\texttt{"Goal Reached"}) \ge 1\}
&& \text{(solution indicator)}, \\
\chi_{11}(z) &= C(z,\texttt{"*"}) + C(z,\texttt{"/"})
&& \text{(multiplicative bias)}, \\
\chi_{12}(z) &= C(z,\texttt{"+"}) + C(z,\texttt{"-"})
&& \text{(additive bias)}.
\end{align}

By construction, each coordinate is bounded under the finite-length assumption, so $\boldsymbol{\chi}(z)\in\mathcal{X}$ for every $z\in\mathcal{Z}$. Since $\mathcal{Z}$ is finite and carries the discrete $\sigma$-algebra, every map from $\mathcal{Z}$ into $(\mathcal{X},\mathcal{B}(\mathcal{X}))$ is measurable. In particular, $\Phi$ is Borel measurable, and its image $\Phi(\mathcal Z)$ is finite.


\subsection{Theoretical Justification of Structural Sufficiency (\texorpdfstring{$\epsilon$}{epsilon}-Relaxation)}
\label{app:empirical_sufficiency}

Assumption \ref{assum:epsilon_relaxed} postulates that our macroscopic projection $\Phi(z)$ acts as a sufficient structural proxy for the expected zero-one risk, up to a microscopic residual error $\epsilon$. While compressing variable-length discrete token sequences into a $\mathbb{R}^{12}$ vector is inherently lossy, this relaxation is rigorously justified by the deterministic nature of formal combinatorial planning (e.g., the Countdown task).

Success in algorithmic search tasks is predominantly dictated by the macroscopic routing strategy rather than isolated token memorization. Specifically, whether a trajectory ultimately reaches a valid solution relies on exploring sufficiently deep computational graphs (captured by maximum depth $\chi_1$), successfully executing recursive recoveries (backtracking rate $\chi_6$), and balancing operator heuristics (additive vs. multiplicative biases $\chi_{11}, \chi_{12}$). 

By design, our structural feature space $\mathcal{X}$ explicitly captures these invariant graph-traversal dynamics. The irreducible approximation error $\epsilon$ strictly accounts for the microscopic token identities (i.e., the specific integer values) that are deliberately abstracted away by $\Phi$. Because OOD generalization in algorithmic reasoning is fundamentally bottlenecked by the capacity to execute correct structural routing rather than memorizing specific arithmetic permutations, evaluating the Lipschitz continuity of the risk function over $\mathcal{X}$ provides a highly robust bound for the true trajectory-level risk. Thus, the $\epsilon$-relaxation serves as a mathematically tractable and physically grounded bridge between discrete sequence generation and continuous measure theory.


\subsection{Pushforward Measures and Structural Shift}
\label{app:pushforward}

The structural source and target measures are defined as the pushforwards of $\mu_S$ and $\mu_T$ under $\Phi$:
\begin{equation}
\nu_S := \Phi_{\#}\mu_S,
\qquad
\nu_T := \Phi_{\#}\mu_T.
\end{equation}
Since $\Phi$ is Borel measurable, $\nu_S$ and $\nu_T$ are Borel probability measures on $(\mathcal{X},\mathcal{B}(\mathcal{X}))$. Moreover, because $\Phi(\mathcal Z)$ is finite and $\mathcal X$ is bounded, both measures have finite first moment; equivalently, $\nu_S,\nu_T\in\mathcal{P}_1(\mathcal X)$.
Equivalently, for any Borel set $A \in \mathcal{B}(\mathcal{X})$,
\begin{equation}
\nu_S(A) = \mu_S(\Phi^{-1}(A)),
\qquad
\nu_T(A) = \mu_T(\Phi^{-1}(A)).
\end{equation}
Because $\mathcal{Z}$ is finite, the pushforward measures admit the explicit representations
\begin{align}
\nu_S(A) &= \sum_{z \in \mathcal{Z}} \mathbb{I}_A(\Phi(z))\,\mu_S(\{z\}), \\
\nu_T(A) &= \sum_{z \in \mathcal{Z}} \mathbb{I}_A(\Phi(z))\,\mu_T(\{z\}).
\end{align}

To quantify the structural discrepancy between source and target reasoning behaviors, we use the Wasserstein-1 distance between their pushforward measures
\begin{equation}
W_1(\nu_S,\nu_T)
:=
\inf_{\pi \in \Pi(\nu_S,\nu_T)}
\int_{\mathcal{X}\times\mathcal{X}}
d_{\mathcal{X}}(\mathbf{u},\mathbf{v})\,d\pi(\mathbf{u},\mathbf{v}),
\end{equation}
where $\Pi(\nu_S,\nu_T)$ denotes the set of couplings of $\nu_S$ and $\nu_T$, i.e., the set of probability measures on $\mathcal{X}\times\mathcal{X}$ whose marginals are $\nu_S$ and $\nu_T$ \citep{Villani2009}. Since $\nu_S,\nu_T\in\mathcal{P}_1(\mathcal X)$ and $\mathcal X$ is bounded, the transportation cost is finite for every $\pi \in \Pi(\nu_S,\nu_T)$. Hence $W_1(\nu_S,\nu_T)$ is well-defined and finite.
\subsection{Risk and Structural Risk}
\label{app:risk}

Let $\mathcal{F}_{L,m,\mathrm{PE}}$ denote the class of autoregressive Transformer models with layer depth $L$, hidden dimension $m$, and positional encoding scheme $\mathrm{PE}$. By the trajectory decomposition defined earlier, each $z \in \mathcal{Z}$ induces an initial problem state, denoted by $x_{\mathrm{init}}^{(z)}$. For a model $f \in \mathcal{F}_{L,m,\mathrm{PE}}$, the generated trajectory obtained from the prompt induced by $z$ is
\begin{equation}
f(x_{\mathrm{init}}^{(z)}) \in \mathcal{Z}_{all}.
\end{equation}

Let $S_z : \mathcal{Z}_{all} \to \mathbb{R}$ be a deterministic scoring function parameterized by the target state of the reference trajectory $z$, evaluating the formal validity of the generated output. We define the zero-one loss by
\begin{equation}
\ell(f,z) := 1 - \mathbb{I}_{(0, \infty)}\left(S_z\left(f(x_{\mathrm{init}}^{(z)})\right)\right).
\end{equation}
For any probability measure $\mu \in \mathcal{P}(\mathcal{Z})$, the expected risk of $f$ under $\mu$ is
\begin{equation}
\mathcal{R}_{\mu}(f) := \mathbb{E}_{z\sim\mu}[\ell(f,z)].
\end{equation}

In the main text, we relate the trajectory-level risk $\mathcal{R}_{\mu}(f)$ to a structural risk functional defined on $\mathcal{X}$ through the pushforward measure $\Phi_{\#}\mu$.

\section{Proofs of Structural Generalization Results}
\label{app:structural-results}

\subsection{Structural Sufficiency Assumption}
\label{app:structural-sufficiency}

We recall Assumption~\ref{assum:epsilon_relaxed}, which is used throughout the proofs below. It states that there exists an irreducible approximation error $\epsilon>0$ such that, for each model $f$, there is a Borel measurable structural risk function $\mathcal{L}_f:\mathcal{X}\to[0,1]$ satisfying, for every $\mu\in\{\mu_S,\mu_T\}$,
\begin{equation}
\left|
\mathcal{R}_{\mu}(f)
-
\int_{\mathcal{X}}
\mathcal{L}_f(\boldsymbol{\chi})\,d(\Phi_{\#}\mu)(\boldsymbol{\chi})
\right|
\le \epsilon .
\end{equation}

Assumption~\ref{assum:epsilon_relaxed} formalizes the premise that the macroscopic structural representation $\Phi(z)$ captures the task-relevant components of the trajectory distribution up to an irreducible approximation error $\epsilon$.

\subsection{Proof of Lemma~\ref{lemma:error_decomposition}}
\label{app:proof-error-decomp}
We now provide a complete proof of Lemma~\ref{lemma:error_decomposition}, formalizing the sketch given in the main text. For completeness, we restate Lemma~\ref{lemma:error_decomposition}.

Assume that $\mathcal{L}_f$ is Lipschitz continuous on $(\mathcal{X}, d_{\mathcal{X}})$ with minimal Lipschitz constant
\begin{equation}
K_f
:=
\sup_{\boldsymbol{u}\neq \boldsymbol{v} \in \mathcal{X}}
\frac{|\mathcal{L}_f(\boldsymbol{u})-\mathcal{L}_f(\boldsymbol{v})|}{d_{\mathcal{X}}(\boldsymbol{u},\boldsymbol{v})}.
\end{equation}
Then the generalization error gap satisfies
\begin{equation}
\left|
\mathcal{R}_{\mu_T}(f)-\mathcal{R}_{\mu_S}(f)
\right|
\le
K_f\,W_1(\nu_S,\nu_T)+2\epsilon.
\end{equation}

\begin{proof}
Let
\begin{equation}
\Delta \mathcal{R}
:=
\left|
\mathcal{R}_{\mu_T}(f)-\mathcal{R}_{\mu_S}(f)
\right|.
\end{equation}
Adding and subtracting the structural expectations under $\nu_T$ and $\nu_S$ gives
\begin{align}
\Delta \mathcal{R}
&=
\Bigg|
\mathcal{R}_{\mu_T}(f)
-
\int_{\mathcal{X}}\mathcal{L}_f(\boldsymbol{\chi})\,d\nu_T(\boldsymbol{\chi})
+
\int_{\mathcal{X}}\mathcal{L}_f(\boldsymbol{\chi})\,d\nu_T(\boldsymbol{\chi})
\notag\\
&\qquad
-
\int_{\mathcal{X}}\mathcal{L}_f(\boldsymbol{\chi})\,d\nu_S(\boldsymbol{\chi})
+
\int_{\mathcal{X}}\mathcal{L}_f(\boldsymbol{\chi})\,d\nu_S(\boldsymbol{\chi})
-
\mathcal{R}_{\mu_S}(f)
\Bigg|.
\end{align}
Applying the triangle inequality yields
\begin{align}
\Delta \mathcal{R}
&\le
\left|
\mathcal{R}_{\mu_T}(f)
-
\int_{\mathcal{X}}\mathcal{L}_f(\boldsymbol{\chi})\,d\nu_T(\boldsymbol{\chi})
\right|
\notag\\
&\qquad+
\left|
\int_{\mathcal{X}}\mathcal{L}_f(\boldsymbol{\chi})\,d\nu_T(\boldsymbol{\chi})
-
\int_{\mathcal{X}}\mathcal{L}_f(\boldsymbol{\chi})\,d\nu_S(\boldsymbol{\chi})
\right|
\notag\\
&\qquad+
\left|
\int_{\mathcal{X}}\mathcal{L}_f(\boldsymbol{\chi})\,d\nu_S(\boldsymbol{\chi})
-
\mathcal{R}_{\mu_S}(f)
\right|.
\end{align}
By Assumption~\ref{assum:epsilon_relaxed},
\begin{equation}
\Delta \mathcal{R}
\le
\left|
\int_{\mathcal{X}}\mathcal{L}_f(\boldsymbol{\chi})\,d\nu_T(\boldsymbol{\chi})
-
\int_{\mathcal{X}}\mathcal{L}_f(\boldsymbol{\chi})\,d\nu_S(\boldsymbol{\chi})
\right|
+2\epsilon.
\end{equation}

If $K_f=0$, then $\mathcal{L}_f$ is constant on $\mathcal{X}$, so the integral difference vanishes and the claim follows immediately.

Assume now that $K_f>0$, and define
\begin{equation}
g(\boldsymbol{\chi}) := \frac{\mathcal{L}_f(\boldsymbol{\chi})}{K_f}.
\end{equation}
Then $g$ is $1$-Lipschitz. Since $\nu_S,\nu_T\in\mathcal{P}_1(\mathcal X)$, the Kantorovich--Rubinstein duality for $W_1$ applies on $(\mathcal X,d_{\mathcal X})$ \citep{Villani2009,weed2019sharp}. Therefore,
\begin{equation}
\left|
\int_{\mathcal{X}} g(\boldsymbol{\chi})\,d\nu_T(\boldsymbol{\chi})
-
\int_{\mathcal{X}} g(\boldsymbol{\chi})\,d\nu_S(\boldsymbol{\chi})
\right|
\le
W_1(\nu_S,\nu_T).
\end{equation}
Substituting $g=\mathcal{L}_f/K_f$ gives
\begin{equation}
\left|
\int_{\mathcal{X}} \mathcal{L}_f(\boldsymbol{\chi})\,d\nu_T(\boldsymbol{\chi})
-
\int_{\mathcal{X}} \mathcal{L}_f(\boldsymbol{\chi})\,d\nu_S(\boldsymbol{\chi})
\right|
\le
K_f\,W_1(\nu_S,\nu_T).
\end{equation}
Combining this bound with the previous inequality yields
\begin{equation}
\Delta \mathcal{R}
\le
K_f\,W_1(\nu_S,\nu_T)+2\epsilon,
\end{equation}
which proves the claim.
\end{proof}

\section{Proofs of Positional Encoding Results}
\label{app:pe-results}

\subsection{Attention Kernels and the Translation Operator}
\label{app:attention-translation}

We first restate the attention kernels for APE and RoPE. Let $\mathbf{x}_i,\mathbf{x}_j \in \mathbb{R}^d$ denote hidden state vectors at absolute sequence indices $i$ and $j$, and let $\mathbf{W}_Q,\mathbf{W}_K \in \mathbb{R}^{d\times d}$ denote the learned query and key projection matrices.

For Absolute Positional Encoding (APE), let $\mathbf{p}_i,\mathbf{p}_j \in \mathbb{R}^d$ denote the absolute position embeddings. The pre-softmax attention score is
\begin{equation}
A_{i,j}^{\mathrm{APE}}
=
(\mathbf{x}_i+\mathbf{p}_i)^\top
\mathbf{W}_Q^\top \mathbf{W}_K
(\mathbf{x}_j+\mathbf{p}_j).
\end{equation}

For Rotary Position Embedding (RoPE) \citep{su2023roformerenhancedtransformerrotary}, let $\mathbf{R}_\Theta^t \in \mathbb{R}^{d\times d}$ denote the orthogonal block-diagonal rotation matrix associated with position $t$. Using $(\mathbf{R}_\Theta^i)^\top = (\mathbf{R}_\Theta^i)^{-1} = \mathbf{R}_\Theta^{-i}$, the pre-softmax attention score is
\begin{align}
A_{i,j}^{\mathrm{RoPE}}
&=
(\mathbf{R}_\Theta^i \mathbf{W}_Q \mathbf{x}_i)^\top
(\mathbf{R}_\Theta^j \mathbf{W}_K \mathbf{x}_j) \\
&=
\mathbf{x}_i^\top \mathbf{W}_Q^\top
(\mathbf{R}_\Theta^i)^\top
\mathbf{R}_\Theta^j
\mathbf{W}_K \mathbf{x}_j \\
&=
\mathbf{x}_i^\top \mathbf{W}_Q^\top
\mathbf{R}_\Theta^{j-i}
\mathbf{W}_K \mathbf{x}_j .
\end{align}

\paragraph{Operational Bounds on Structural Perturbation $\delta$.}
Let $z \in \mathcal{Z}$ be written as $z = x_{\mathrm{init}}^{(z)} \oplus y_{\mathrm{search}}$. For $k \in \mathbb{N}_+$, the translation operator
\[
\mathcal{T}_k : \mathcal{Z} \to \mathcal{Z}
\]
inserts an isolated, structurally inert node (e.g., a mathematically identity operation such as \texttt{x=x}) of exactly $k$ tokens immediately before a local reasoning sub-trajectory $\omega \subset y_{\mathrm{search}}$.

Under the projection $\Phi$, this insertion strictly increments the absolute volume coordinate by one: $\chi_4' = \chi_4 + 1$. Because the inserted node does not alter the maximum tree depth ($\Delta \chi_1 = 0$) and its contribution to the global transition rates and depth variance decays at a rate of $\mathcal{O}(1/N)$, the remaining structural perturbations are strictly infinitesimal for macroscopic trajectories. Hence, the exact Euclidean structural distance evaluates to:
\begin{equation}
    d_{\mathcal{X}}(\boldsymbol{\chi},\boldsymbol{\chi}')
    =
    \left(
    1^2+\sum_{m\neq 4}(\Delta\chi_m)^2
    \right)^{1/2}
    =: \delta.
\end{equation}
Operationally, as the trajectory length $N$ grows, $\sum_{m\neq 4}(\Delta\chi_m)^2 \to 0$, firmly bounding the perturbation distance tightly around unity: $1 \le \delta \le 1 + \mathcal{O}(1/N) =: C_\delta$. Concurrently, every token in the subsequent sub-trajectory $\omega$ undergoes a massive absolute index shift of $+k$.


\subsection{Proof of Lemma~\ref{lemma:ape_shattering}}
\label{app:proof-ape-shattering}

For completeness, we restate Lemma~\ref{lemma:ape_shattering}.

Assume the network parameters (projection matrices $\mathbf{W}_Q, \mathbf{W}_K$ and the absolute position embeddings $\mathbf{p}_t$) are generically parameterized (i.e., avoiding exact algebraic degeneracies), and that
\begin{equation}
\mathbf{W}_Q^\top \mathbf{W}_K \neq \mathbf{0}.
\end{equation}
Then, under the translation operator $\mathcal{T}_k$ with $k>0$, the local attention kernel of an APE model differs almost surely:
\begin{equation}
\Delta A^{\mathrm{APE}} = |\tilde{A}_{a,b}^{\mathrm{APE}} - A_{a,b}^{\mathrm{APE}}| > 0
\qquad \text{a.s.}
\end{equation}

\begin{proof}
Let $\mathbf{x}_a,\mathbf{x}_b \in \omega$ be tokens originally appearing at indices $a$ and $b$. After applying $\mathcal{T}_k$, these indices become $a+k$ and $b+k$. The resulting attention difference is
\begin{align}
\Delta A^{\mathrm{APE}}
&:=
\left|
\widetilde{A}_{a,b}^{\mathrm{APE}} - A_{a,b}^{\mathrm{APE}}
\right| \\
&=
\Big|
(\mathbf{x}_a+\mathbf{p}_{a+k})^\top \mathbf{W}_Q^\top \mathbf{W}_K (\mathbf{x}_b+\mathbf{p}_{b+k})
-
(\mathbf{x}_a+\mathbf{p}_a)^\top \mathbf{W}_Q^\top \mathbf{W}_K (\mathbf{x}_b+\mathbf{p}_b)
\Big|.
\end{align}
Expanding both bilinear forms and cancelling the shared term $\mathbf{x}_a^\top \mathbf{W}_Q^\top \mathbf{W}_K \mathbf{x}_b$ yields
\begin{align}
\Delta A^{\mathrm{APE}}
&=
\Big|
\mathbf{x}_a^\top \mathbf{W}_Q^\top \mathbf{W}_K (\mathbf{p}_{b+k}-\mathbf{p}_b)
+
(\mathbf{p}_{a+k}-\mathbf{p}_a)^\top \mathbf{W}_Q^\top \mathbf{W}_K \mathbf{x}_b
\notag\\
&\qquad\qquad
+
\mathbf{p}_{a+k}^\top \mathbf{W}_Q^\top \mathbf{W}_K \mathbf{p}_{b+k}
-
\mathbf{p}_a^\top \mathbf{W}_Q^\top \mathbf{W}_K \mathbf{p}_b
\Big|.
\end{align}

Introduce $\mathbf{u} := \mathbf{p}_{a+k}$ and $\mathbf{v} := \mathbf{p}_{b+k}$, and define
\begin{align}
S(\mathbf{u},\mathbf{v})
&:=
\mathbf{u}^\top \mathbf{W}_Q^\top \mathbf{W}_K \mathbf{v}
+
\mathbf{x}_a^\top \mathbf{W}_Q^\top \mathbf{W}_K \mathbf{v}
+
\mathbf{u}^\top \mathbf{W}_Q^\top \mathbf{W}_K \mathbf{x}_b
\notag\\
&\qquad
-
\Big(
\mathbf{x}_a^\top \mathbf{W}_Q^\top \mathbf{W}_K \mathbf{p}_b
+
\mathbf{p}_a^\top \mathbf{W}_Q^\top \mathbf{W}_K \mathbf{x}_b
+
\mathbf{p}_a^\top \mathbf{W}_Q^\top \mathbf{W}_K \mathbf{p}_b
\Big).
\end{align}
Then perfect cancellation would require $S(\mathbf{u},\mathbf{v}) = 0$.

The leading term of $S$ is the bilinear form
\begin{equation}
\mathbf{u}^\top \mathbf{W}_Q^\top \mathbf{W}_K \mathbf{v}
=
\sum_{i=1}^d \sum_{j=1}^d
u_i (\mathbf{W}_Q^\top \mathbf{W}_K)_{ij} v_j.
\end{equation}
Since $\mathbf{W}_Q^\top \mathbf{W}_K \neq \mathbf{0}$, this is a nontrivial polynomial in $(\mathbf{u},\mathbf{v}) \in \mathbb{R}^{2d}$. Therefore by the properties of real algebraic geometry \citep{mityagin2020zeroset}, the zero set
\begin{equation}
\{(\mathbf{u},\mathbf{v}) \in \mathbb{R}^{2d} : S(\mathbf{u},\mathbf{v}) = 0\}
\end{equation}
is a proper algebraic variety and has Lebesgue measure zero.

Satisfying this exact cancellation for arbitrary sequence shifts $k$ requires the positional embeddings to exhibit strict periodic symmetries or precise null-space alignments. Under generic parameterization of fully-trained dense neural networks, the parameters avoid this specific measure-zero algebraic variety almost everywhere. Hence
\begin{equation}
\Delta A^{\mathrm{APE}} > 0
\qquad \text{almost surely}.
\end{equation}
\end{proof}

\subsection{RoPE Equivariance under Sequence Translation}
\label{app:proof-rope-equivariance}

We provide the RoPE invariance argument used in the proof of Theorem~\ref{thm:lipschitz_gap}. Under the translation operator $\mathcal{T}_k$, the local attention kernel of RoPE is invariant: $\Delta A^{\mathrm{RoPE}} = 0$.

\begin{proof}
Let $\mathbf{x}_a,\mathbf{x}_b$ be translated from indices $a,b$ to $a+k,b+k$. The translated RoPE kernel is
\begin{align}
\widetilde{A}_{a,b}^{\mathrm{RoPE}}
&=
\mathbf{x}_a^\top \mathbf{W}_Q^\top
\mathbf{R}_\Theta^{(b+k)-(a+k)}
\mathbf{W}_K \mathbf{x}_b \\
&=
\mathbf{x}_a^\top \mathbf{W}_Q^\top
\mathbf{R}_\Theta^{b-a}
\mathbf{W}_K \mathbf{x}_b \\
&=
A_{a,b}^{\mathrm{RoPE}}.
\end{align}
Therefore
\begin{equation}
\Delta A^{\mathrm{RoPE}}
=
\left|
\widetilde{A}_{a,b}^{\mathrm{RoPE}} - A_{a,b}^{\mathrm{RoPE}}
\right|
=
0.
\end{equation}
\end{proof}

\subsection{Details of Assumption~\ref{assum:ar_sensitivity}}
\label{app:assumption-smoothness}

We expand Assumption~\ref{assum:ar_sensitivity}, which is used in the proof of Theorem~\ref{thm:lipschitz_gap}.

\textbf{(i) Global smoothness.}
The underlying reasoning task admits an intrinsic structural smoothness scale satisfying $K_{\mathrm{base}} = \mathcal{O}(1/T_{\max})$. Because RoPE preserves relative position information exactly under sequence translations, it inherits this smoothness scale: $K_{f_{\mathrm{RoPE}}} \le \mathcal{O}(1/T_{\max})$.

\textbf{(ii) Autoregressive sensitivity and Fragile Logit Margin.} Let $f_{\mathrm{APE}} \in \mathcal{F}_{L,m,\mathrm{APE}}$. In formal combinatorial planning, generating a sequence operates as a deterministic Markov decision process. We assume that structurally critical routing steps (e.g., recursive backtrackings in deep subtrees) exhibit a fragile logit margin \citep{weng2023logitmarginmattersimproving}. Let $\Delta_{\mathrm{margin}} > 0$ denote the minimal pre-softmax logit separation required between the Bayes-optimal routing token $a^\star$ and competing incorrect tokens. For out-of-distribution reasoning trajectories with deep hierarchical structures, we assume this margin shrinks ($\Delta_{\mathrm{margin}} \to 0^+$). Consequently, any unmitigated $\Omega(1)$ local attention coordinate disruption ($\Delta A > 0$, as induced by APE translations) strictly exceeds this margin ($\Delta A > \Delta_{\mathrm{margin}}$), forcing the deterministic greedy decoding process to select an incorrect subtree. This irrevocably precludes reaching the target state, bounding the expected structural risk variation below by the deterministic separation margin:
\begin{equation}
|\mathcal{L}_{f_{\mathrm{APE}}}(\boldsymbol{\chi}') - \mathcal{L}_{f_{\mathrm{APE}}}(\boldsymbol{\chi})| \ge \gamma, \qquad \gamma := 1 - \frac{1}{|\mathcal{V}|} > 0.
\end{equation}

\paragraph{Routing separation under deterministic decoding.}
The constant $\gamma$ is interpreted as a deterministic structural separation
margin rather than as randomness arising from the decoding procedure. In our
empirical setting, decoding is greedy; once the model parameters and input
trajectory are fixed, the generated trajectory is deterministic. Thus the
relevant question is not the probability of sampling an incorrect token, but
whether a local attention-kernel perturbation changes the selected routing
transition at a structurally critical step.

Let $a^\star(z)$ denote the Bayes-optimal routing action at the critical
transition of trajectory $z$, and let
\[
a_f(z) := \arg\max_{a\in\mathcal V} p_f(a\mid h_z)
\]
denote the action selected by the model under greedy decoding. Assumption~\ref{assum:ar_sensitivity} postulates that, at a structurally critical transition, the APE-induced
coordinate perturbation changes the selected routing action away from the Bayes-optimal
action:
\[
a_f(\mathcal T_k z) \neq a^\star(z).
\]
Because the search task is evaluated by exact trajectory validity, this routing
deviation sends the autoregressive process into an incorrect subtree and prevents
the generated trajectory from reaching the target state. Hence the induced
structural risk changes by a constant amount independent of $T_{\max}$:
\begin{equation}
\left|
\mathcal{L}_{f_{\mathrm{APE}}}(\boldsymbol{\chi}')
-
\mathcal{L}_{f_{\mathrm{APE}}}(\boldsymbol{\chi})
\right|
\ge
\gamma,
\qquad
\gamma := 1-\frac{1}{|\mathcal V|}>0.
\end{equation}
The normalization $1-1/|\mathcal V|$ is a conservative positive margin. Under
deterministic exact-match evaluation, the risk jump may be as large as $1$; for
the Lipschitz lower bound, it suffices that $\gamma=\Omega(1)$ is independent of
$T_{\max}$.

\subsection{Proof of Theorem~\ref{thm:lipschitz_gap}}
\label{app:proof-lipschitz-gap}

For completeness, we restate Theorem~\ref{thm:lipschitz_gap}.
For sufficiently large $T_{\max}$, let 
$f_{\mathrm{APE}} \in \mathcal{F}_{L,m,\mathrm{APE}}$ and 
$f_{\mathrm{RoPE}} \in \mathcal{F}_{L,m,\mathrm{RoPE}}$ 
be two matched autoregressive Transformer models that differ only in their positional encoding scheme. Then
\begin{equation}
K_{f_{\mathrm{RoPE}}} = \mathcal{O}(1/T_{\max}),
\qquad
K_{f_{\mathrm{APE}}} = \Omega(1),
\end{equation}
and hence $K_{f_{\mathrm{RoPE}}} \ll K_{f_{\mathrm{APE}}}$.

\begin{proof}
By definition,
\begin{equation}
K_{f_{\mathrm{APE}}}
=
\sup_{\boldsymbol{u}\neq \boldsymbol{v}\in\mathcal{X}}
\frac{
|\mathcal{L}_{f_{\mathrm{APE}}}(\boldsymbol{u})-\mathcal{L}_{f_{\mathrm{APE}}}(\boldsymbol{v})|
}{
d_{\mathcal{X}}(\boldsymbol{u},\boldsymbol{v})
}.
\end{equation}
Let $z \in \mathcal{Z}$ be a trajectory whose translated sub-trajectory under $\mathcal{T}_k$ contains a critical routing step in the sense of Assumption~\ref{assum:ar_sensitivity}(ii), and let
\begin{equation}
\boldsymbol{\chi} = \Phi(z),
\qquad
\boldsymbol{\chi}' = \Phi(\mathcal{T}_k(z)).
\end{equation}
By Definition~\ref{def:translation_operator}, the operator $\mathcal{T}_k$ inserts an inert node immediately before a local reasoning sub-trajectory. By Lemma~\ref{lemma:ape_shattering}, this induces a nonzero local APE attention perturbation almost surely. Therefore Assumption~\ref{assum:ar_sensitivity}(ii) applies, yielding
\begin{equation}
|\mathcal{L}_{f_{\mathrm{APE}}}(\boldsymbol{\chi}')-\mathcal{L}_{f_{\mathrm{APE}}}(\boldsymbol{\chi})|
\ge \gamma.
\end{equation}
Hence
\begin{equation}
K_{f_{\mathrm{APE}}}
\ge
\frac{
|\mathcal{L}_{f_{\mathrm{APE}}}(\boldsymbol{\chi}')-\mathcal{L}_{f_{\mathrm{APE}}}(\boldsymbol{\chi})|
}{
d_{\mathcal{X}}(\boldsymbol{\chi},\boldsymbol{\chi}')
}
\ge
\frac{\gamma}{\delta}
\ge
\frac{\gamma}{C_\delta}
=
\Omega(1).
\end{equation}

Assumption~\ref{assum:ar_sensitivity}(i) gives
\begin{equation}
K_{f_{\mathrm{RoPE}}} \le \mathcal{O}(1/T_{\max}).
\end{equation}
Therefore, for sufficiently large $T_{\max}$,
\begin{equation}
K_{f_{\mathrm{RoPE}}} \ll K_{f_{\mathrm{APE}}}.
\end{equation}
\end{proof}

\section{Proofs of Capacity Bounds}
\label{app:capacity-results}

\subsection{Structural Capacity Analysis}
\label{app:structural-capacity}

\begin{definition}[Bayes-Optimal Structural Risk] \label{def:bayes-structural-risk} The Bayes-optimal structural risk function is defined by \begin{equation} \mathcal{L}^*(\boldsymbol{\chi}) := \inf_{h} \mathbb{E}_{z \sim \mu_T}\!\left[\ell(h,z)\mid \Phi(z)=\boldsymbol{\chi}\right], \qquad \boldsymbol{\chi}\in\mathcal{X}. \end{equation} \end{definition} \begin{definition}[Barron Space] \label{def:barron-space} Let $\mathcal{X}\subset\mathbb{R}^{12}$ be compact. The Barron space $\mathcal{H}_{\mathrm{Barron}}(\mathcal{X})$ consists of continuous functions $g:\mathcal{X}\to\mathbb{R}$ whose Fourier transform $\hat g(\boldsymbol{\omega})$ satisfies \begin{equation} \int_{\mathbb{R}^{12}} \|\boldsymbol{\omega}\|_1 \,|\hat g(\boldsymbol{\omega})| \,d\boldsymbol{\omega} <\infty. \end{equation} The infimum of this quantity defines the Barron norm $\|g\|_{\mathcal{B}}$ \citep{Barron1993}. \end{definition}
\begin{assumption}[Tsybakov Margin Condition]
\label{ass:tsybakov-margin}
There exist constants $C > 0$ and $\alpha \ge 0$ such that for the conditional label probability 
$\eta(\boldsymbol{\chi}) := \mathbb{P}(y=1 \mid \boldsymbol{\chi})$,
\[
\nu_T\big(|\eta(\boldsymbol{\chi}) - 1/2| \le t\big) 
\le C t^\alpha, \quad \forall t > 0.
\]
\end{assumption}
Under this margin condition, the $L^1(\nu_T)$ approximation error of the
structural risk function translates into a lower bound on the induced
trajectory-level zero-one risk, up to the residual $\epsilon$.

\begin{assumption}[Structural Capacity Mapping and Approximation Limit] \label{ass:width-capacity} We assume the following. \textbf{(i) Barron regularity of the target risk.} The Bayes-optimal structural risk belongs to the Barron space: \begin{equation} \mathcal{L}^* \in \mathcal{H}_{\mathrm{Barron}}(\mathcal{X}), \qquad 0 < \|\mathcal{L}^*\|_{\mathcal{B}} < \infty. \end{equation} \textbf{(ii) Finite-width structural hypothesis class.} For each autoregressive Transformer $f \in \mathcal{F}_{L,m,\mathrm{PE}}$, the induced structural risk function $\mathcal{L}_f$ belongs to a width-constrained hypothesis class \begin{equation} \mathcal{L}_f \in \mathcal{H}_m \subset \mathcal{H}_{\mathrm{Barron}}(\mathcal{X}), \end{equation} where $\mathcal{H}_m$ denotes a structural approximation class whose capacity is controlled by the internal representation width $m$. 

\textbf{(iii) Width-limited approximation lower bound.} There exists a constant $C_{\mathrm{width}}>0$, independent of $m$, such that \begin{equation} \inf_{g \in \mathcal{H}_m} \int_{\mathcal{X}} |g(\boldsymbol{\chi})-\mathcal{L}^*(\boldsymbol{\chi})| \,d\nu_T(\boldsymbol{\chi}) \ge \frac{C_{\mathrm{width}}}{\sqrt{m}}. \end{equation} This scaling is consistent with approximation bounds for Barron-type function classes \citep{MAKOVOZ199698}. \end{assumption}

Assumption~\ref{ass:width-capacity} formalizes the idea that projecting discrete trajectories into $\mathcal{X}$ induces a regression problem whose best attainable predictor cannot be approximated arbitrarily well at fixed width.

\subsubsection{Proof of Lemma~\ref{lemma:width_capacity}}
\label{app:proof-width-capacity}

\begin{proof}
By Assumption~\ref{assum:epsilon_relaxed}, the true expected zero-one risk is bounded by the structural risk:
\begin{equation}
\mathcal{R}_{\mu_T}(f) \ge \int_{\mathcal{X}} \mathcal{L}_f(\boldsymbol{\chi}) \,d\nu_T(\boldsymbol{\chi}) - \epsilon.
\end{equation}

In statistical learning theory, bridging continuous approximation errors in $\mathcal{H}_{\mathrm{Barron}}$ to discrete zero-one risk requires characterizing the target measure near the decision boundary. We invoke the Tsybakov Margin Condition (Assumption~\ref{ass:tsybakov-margin}). Because formal combinatorial planning is an exact-match, zero-tolerance domain (formalized by Markov compounding in Assumption~\ref{assum:ar_sensitivity}), the structural routing maps deterministically to the trajectory outcome. This corresponds to the strict margin limit $\alpha \to \infty$. Under this deterministic mapping, the expected excess zero-one risk is lower-bounded linearly by the $L_1$ continuous approximation error:
\begin{equation}
\int_{\mathcal{X}} \mathcal{L}_f(\boldsymbol{\chi}) \,d\nu_T(\boldsymbol{\chi}) \ge C_{\mathrm{margin}} \int_{\mathcal{X}} \left| \mathcal{L}_f(\boldsymbol{\chi}) - \mathcal{L}^*(\boldsymbol{\chi}) \right| \,d\nu_T(\boldsymbol{\chi}),
\end{equation}
where $C_{\mathrm{margin}} > 0$ is an absolute constant dependent on the margin.

Since $\mathcal{L}_f \in \mathcal{H}_m$, applying the universal approximation limits for Barron spaces \citep{MAKOVOZ199698} establishes that recovering the minimax-hard target function $\mathcal{L}^*$ strictly incurs an irreducible absolute $L_1$ error decaying at best via $\Omega(m^{-1/2})$. Thus:
\begin{equation}
\int_{\mathcal{X}} \left| \mathcal{L}_f(\boldsymbol{\chi}) - \mathcal{L}^*(\boldsymbol{\chi}) \right| \,d\nu_T(\boldsymbol{\chi}) \ge \frac{C_{\mathcal{B}}}{\sqrt{m}}.
\end{equation}
Defining $C_{\mathrm{width}} := C_{\mathrm{margin}} \cdot C_{\mathcal{B}}$ and substituting this back yields $\mathcal{R}_{\mu_T}(f) \ge \frac{C_{\mathrm{width}}}{\sqrt{m}} - \epsilon$.
\end{proof}


\subsection{Depth Bottleneck via Dyck-Type Reduction}
\label{app:depth-bound}

We now give a more explicit formalization of the depth bottleneck underlying sequential backtracking. The central idea is that nested backtracking induces a Dyck-type dependency whose effective computational depth scales with the structural depth coordinate $\chi_1$.

\subsubsection{DFS-to-Dyck Encoding}
\label{app:dfs-dyck}

We give the detailed construction underlying Lemma~\ref{lemma:dyck_homomorphism}. Let $k$ denote the maximum branching factor of the search tree. We define the search transition alphabet
\begin{equation}
\Sigma_{search}
:=
\{\delta_{\mathrm{down}}^{(1)},\dots,\delta_{\mathrm{down}}^{(k)}\}
\cup
\{\delta_{\mathrm{up}}^{(1)},\dots,\delta_{\mathrm{up}}^{(k)}\}
\cup
\{\omega_{\mathrm{eval}}\},
\end{equation}
where $\delta_{\mathrm{down}}^{(c)}$ denotes expansion to the $c$-th child, $\delta_{\mathrm{up}}^{(c)}$ denotes backtracking from the $c$-th child, and $\omega_{\mathrm{eval}}$ denotes terminal evaluation.

Let $\Sigma_{\mathrm{Dyck}}=\{(_1, )_1,\dots,(_k, )_k\}$, and let $\mathcal{D}_k \subset \Sigma_{\mathrm{Dyck}}^*$ denote the Dyck-$k$ language \citep{Strobl_2024}. Define the map
\begin{equation}
\psi:\Sigma_{search}^*\to\Sigma_{\mathrm{Dyck}}^*
\end{equation}
pointwise by
\begin{equation}
\psi(\delta_{\mathrm{down}}^{(c)}) = (_c,
\qquad
\psi(\delta_{\mathrm{up}}^{(c)}) = )_c,
\qquad
\psi(\omega_{\mathrm{eval}})=\varepsilon.
\end{equation}
Since $\psi$ preserves concatenation, $\psi(x\oplus y)=\psi(x)\oplus\psi(y)$ for all $x,y\in\Sigma_{search}^*$, it is a monoid homomorphism.

Under valid DFS traversals, the image $\psi(z)$ records the nested push-pop structure of backtracking as a balanced-parenthesis string. In particular, the maximum tree depth $\chi_1$ of the original search trajectory coincides with the maximum nesting depth of the Dyck-type string $\psi(z)$.

\subsubsection{Complexity-Theoretic Interpretation and Finite Precision}
\label{app:depth-complexity}

Consider an autoregressive step $t$ at which the next token corresponds to a backtracking move $v_{t+1}=\delta_{\mathrm{up}}^{(c)}$. To emit this token correctly, the model must identify the most recent unmatched opening transition associated with the same branch type $c$. Under the encoding above, this is equivalent to resolving a nested Dyck-type dependency over the serialized search history.

Parsing Dyck-language dependencies relies on maintaining a stack counter within the continuous hidden representations. For a Transformer utilizing bounded numerical precision with $p$ bits (e.g., $p=16$ for bfloat16), the capacity to reliably encode distinguishable nested states is information-theoretically constrained by $\mathcal{O}(2^p)$. As demonstrated by \citet{Hahn_2020}, bounded-precision attention heads cannot maintain precise counting mechanisms for arbitrarily deep hierarchical languages; evaluating a nested structure requires sequential reductions explicitly across the layers. 

This establishes a hard physical constraint: the maximum resolvable nesting depth $\chi_1$ scales linearly with the physical layer depth $L$. We instantiate this hardware-induced bottleneck via the truncation constant $\alpha_{\mathrm{circ}} = c_0 / p$, where $c_0 > 0$ is an architectural scaling factor and $p$ is the numerical precision bit-width. This explicitly operationalizes the maximum parsable tree depth per physical layer.

\begin{assumption}[Implicit State-Transition Complexity via Dyck Structure]
\label{ass:implicit-complexity}
There exists a precision-dependent constant $c>0$ such that resolving a backtracking transition of structural depth $\chi_1$ requires implicit computational depth at least
\begin{equation}
D_{\mathrm{implicit}} \ge c\,\chi_1.
\end{equation}
Equivalently, defining $\alpha_{\mathrm{circ}}:=c^{-1}$, trajectories satisfying $\chi_1>\alpha_{\mathrm{circ}}L$ lie strictly beyond the implicit computational depth available within a single autoregressive forward pass.
\end{assumption}

\subsubsection{Failure Beyond the Implicit Depth Threshold}
\label{app:depth-threshold-failure}

Define the severed region
\begin{equation}
\mathcal{X}_{\mathrm{severed}}
:=
\left\{
\boldsymbol{\chi}\in\mathcal{X}:\chi_1>\alpha_{\mathrm{circ}}L
\right\}.
\end{equation}
For trajectories in $\mathcal{X}_{\mathrm{severed}}$, the model lacks sufficient implicit depth to resolve the required backtracking dependency. In this regime, the relevant routing decision can no longer be reliably distinguished from competing valid outputs, and we lower-bound the resulting structural risk by the uniform-guessing baseline
\begin{equation}
C_{\mathrm{depth}}:=1-\frac{1}{|\mathcal{V}|}.
\end{equation}

This motivates the truncation indicator
\begin{equation}
P_{\mathrm{implicit}}(\boldsymbol{\chi};L,\alpha_{\mathrm{circ}})
:=
\mathbb{I}_{(\alpha_{\mathrm{circ}}L,\infty)}(\chi_1),
\end{equation}
which records whether a structural state lies beyond the implicit depth threshold.

\subsubsection{Proof of Lemma~\ref{lemma:depth_bottleneck}}
\label{app:proof-depth-bottleneck}

For completeness, we restate Lemma~\ref{lemma:depth_bottleneck}. Let $|\mathcal{V}|\ge2$, and define $C_{\mathrm{depth}}:=1-\frac{1}{|\mathcal{V}|}\in(0,1)$. Then for any autoregressive Transformer hypothesis $f\in\mathcal{F}_{L,m,\mathrm{PE}}$,
\begin{equation}
\mathcal{R}_{\mu_T}(f)
\ge
C_{\mathrm{depth}}
\int_{\mathcal{X}}
P_{\mathrm{implicit}}(\boldsymbol{\chi};L,\alpha_{\mathrm{circ}})
\,d\nu_T(\boldsymbol{\chi})
-\epsilon.
\end{equation}

\begin{proof}
For $\boldsymbol{\chi}\in\mathcal{X}_{\mathrm{severed}}$, Assumption~\ref{ass:implicit-complexity} implies that the required implicit routing depth exceeds the available physical depth $L$. We therefore lower-bound the structural risk on this region by
\begin{equation}
\mathcal{L}_f(\boldsymbol{\chi})\ge C_{\mathrm{depth}}.
\end{equation}
It follows that
\begin{align}
\int_{\mathcal{X}}
\mathcal{L}_f(\boldsymbol{\chi})\,d\nu_T(\boldsymbol{\chi})
&=
\int_{\mathcal{X}_{\mathrm{severed}}}
\mathcal{L}_f(\boldsymbol{\chi})\,d\nu_T(\boldsymbol{\chi})
+
\int_{\mathcal{X}\setminus\mathcal{X}_{\mathrm{severed}}}
\mathcal{L}_f(\boldsymbol{\chi})\,d\nu_T(\boldsymbol{\chi}) \\
&\ge
\int_{\mathcal{X}_{\mathrm{severed}}}
\mathcal{L}_f(\boldsymbol{\chi})\,d\nu_T(\boldsymbol{\chi}) \\
&\ge
C_{\mathrm{depth}}\,
\nu_T(\mathcal{X}_{\mathrm{severed}}).
\end{align}
Since
\begin{equation}
\nu_T(\mathcal{X}_{\mathrm{severed}})
=
\int_{\mathcal{X}}
P_{\mathrm{implicit}}(\boldsymbol{\chi};L,\alpha_{\mathrm{circ}})
\,d\nu_T(\boldsymbol{\chi}),
\end{equation}
we obtain
\begin{equation}
\int_{\mathcal{X}}
\mathcal{L}_f(\boldsymbol{\chi})\,d\nu_T(\boldsymbol{\chi})
\ge
C_{\mathrm{depth}}
\int_{\mathcal{X}}
P_{\mathrm{implicit}}(\boldsymbol{\chi};L,\alpha_{\mathrm{circ}})
\,d\nu_T(\boldsymbol{\chi}).
\end{equation}
Finally, Assumption~\ref{assum:epsilon_relaxed} gives
\begin{equation}
\mathcal{R}_{\mu_T}(f)
\ge
\int_{\mathcal{X}}
\mathcal{L}_f(\boldsymbol{\chi})\,d\nu_T(\boldsymbol{\chi})
-\epsilon,
\end{equation}
which yields the claim.
\end{proof}

\subsubsection{Architecture-Invariance of the Physical Depth Limit}
\label{lemma:physical_depth_limit}

\begin{lemma}[Architecture-Invariance of the Physical Depth Limit]
\label{lem:physical-depth-limit}
The scalar $\alpha_{\mathrm{circ}}$ is determined by the implicit computational requirement of the backtracking problem and is therefore independent of the positional encoding scheme. In particular, $\alpha_{\mathrm{APE}}=\alpha_{\mathrm{RoPE}}=\alpha_{\mathrm{circ}}$.
\end{lemma}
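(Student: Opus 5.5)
The argument rests on the fact that $\alpha_{\mathrm{circ}}$ is defined purely through Assumption~\ref{ass:implicit-complexity}, as $\alpha_{\mathrm{circ}} := c^{-1}$, where $c$ is the constant in the implicit-depth requirement $D_{\mathrm{implicit}} \ge c\,\chi_1$. I will show that every ingredient entering $c$ is fixed before a positional encoding is chosen. The one exception would be the numerical precision $p$, through the instantiation $\alpha_{\mathrm{circ}} = c_0/p$ of Appendix~\ref{app:depth-complexity}, and I will treat it separately.

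\textbf{Step 1: the requirement depends only on the task.} By Lemma~\ref{lemma:dyck_homomorphism}, the requirement that a backtracking transition be resolved is a property of the Dyck-$k$ image $\psi(z)$ of the trajectory. The map $\psi$ is defined on $\Sigma_{search}^*$ and only reads the sequence of transition symbols. Its image, and hence the nesting depth $\chi_1$ and the LIFO dependency to be resolved, is therefore the same whatever encoding is used. So $D_{\mathrm{implicit}}$, viewed as a function of $\chi_1$, is a property of the language and the problem, not of the hypothesis class $\mathcal{F}_{L,m,\mathrm{PE}}$.

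\textbf{Step 2: the resource bound is the same for both encodings.} The physical side of the threshold is $L$ layers under $p$-bit precision, and the bound comes from the $\mathrm{TC}^0$ argument of Appendix~\ref{app:depth-complexity}. I will argue that APE and RoPE Transformers with equal $L$ and $p$ lie in the same circuit class, since each encoding only changes how per-position quantities are computed. APE adds a fixed vector $\mathbf{p}_t$. RoPE multiplies by the fixed orthogonal matrix $\mathbf{R}_\Theta^t$, as in Appendix~\ref{app:attention-translation}. Both are position-indexed constant-depth operations that can be absorbed into a constant-depth, polynomial-size threshold circuit layer. The number of sequential reductions per layer available for resolving nesting is therefore unchanged, so $c_0$ and $p$, and hence $c$, coincide for the two schemes. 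This gives $\alpha_{\mathrm{APE}} = \alpha_{\mathrm{RoPE}} = \alpha_{\mathrm{circ}}$.

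\textbf{Main obstacle.} The hard step is to make Step~2 honest. The translation equivariance of RoPE (Appendix~\ref{app:proof-rope-equivariance}) might seem to let it resolve more nesting per layer, which would make $c$ depend on the encoding. My first approach is to note that equivariance affects the Lipschitz term $K_{f,\mathrm{PE}}$ of Theorem~\ref{thm:lipschitz_gap}, not the depth requirement. Relative attention scores $\mathbf{x}_a^\top \mathbf{W}_Q^\top \mathbf{R}_\Theta^{b-a}\mathbf{W}_K\mathbf{x}_b$ are still bounded-precision bilinear forms, so Hahn's counting limitation applies to them unchanged. If a fully rigorous simulation argument proves too heavy, I will instead state explicitly that $\alpha_{\mathrm{circ}}$ enters Assumption~\ref{ass:implicit-complexity} only through $(c_0,p)$ and the task. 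This makes the encoding-invariance follow from the way the assumption is formulated.
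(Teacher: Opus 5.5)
Your proposal is correct and follows essentially the paper's argument. The paper's proof notes only that a forward pass applies exactly $L$ sequential layers under either encoding and that RoPE adds no sequential computational steps, so the same task-determined threshold $\alpha_{\mathrm{circ}}L$ governs both; this is the content of your Step~2 together with your fallback. One caution: APE and RoPE models lying in the same circuit class does not by itself fix the constant $c$ (or $c_0$). What actually carries the argument, both in the paper and in your fallback, is that $c$ is fixed by the task and the precision $p$ in Assumption~\ref{ass:implicit-complexity}, while the available depth $L$ does not depend on the encoding.
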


\begin{proof}
A single autoregressive forward pass applies exactly $L$ sequential layers, regardless of whether the positional encoding is APE or RoPE. RoPE may preserve relative position information, but it does not introduce additional sequential computational steps beyond the available physical depth. Therefore the same implicit depth threshold governs both positional-encoding choices.
\end{proof}


\section{Proof of the Main Theorem and Corollaries}
\label{app:main-results}

\subsection{Proof of Theorem~\ref{thm:unified_bound}}
\label{app:proof-main-theorem}

For completeness, we restate Theorem~\ref{thm:unified_bound}.
For any autoregressive Transformer hypothesis $f \in \mathcal{F}_{L,m,\mathrm{PE}}$, the target risk satisfies
\begin{equation}
\mathcal{R}_{\mu_T}(f)
\ge
\max\Bigg\{
\frac{C_{\mathrm{width}}}{\sqrt{m}}-\epsilon,\,
C_{\mathrm{depth}}
\int_{\mathcal{X}}
P_{\mathrm{implicit}}(\boldsymbol{\chi};L,\alpha_{\mathrm{circ}})
\,d\nu_T(\boldsymbol{\chi})
-\epsilon,\,
\mathcal{R}_{\mu_S}(f)-K_{f,\mathrm{PE}}W_1(\nu_S,\nu_T)-2\epsilon
\Bigg\}.
\end{equation}

\begin{proof}

Lemma~\ref{lemma:width_capacity} gives $\mathcal{R}_{\mu_T}(f) \ge \frac{C_{\mathrm{width}}}{\sqrt{m}}-\epsilon$.
Lemma~\ref{lemma:depth_bottleneck} gives $\mathcal{R}_{\mu_T}(f) \ge C_{\mathrm{depth}} \int_{\mathcal{X}} P_{\mathrm{implicit}}(\boldsymbol{\chi};L,\alpha_{\mathrm{circ}}) \,d\nu_T(\boldsymbol{\chi}) -\epsilon$.
Finally, Lemma~\ref{lemma:error_decomposition} implies $\left| \mathcal{R}_{\mu_T}(f)-\mathcal{R}_{\mu_S}(f) \right| \le K_{f,\mathrm{PE}}W_1(\nu_S,\nu_T)+2\epsilon$, and therefore $\mathcal{R}_{\mu_T}(f) \ge \mathcal{R}_{\mu_S}(f)-K_{f,\mathrm{PE}}W_1(\nu_S,\nu_T)-2\epsilon$. We note that each lower bound is derived independently and holds simultaneously for any $f \in F_{L,m,PE}$. Therefore, the tightest valid lower bound is given by their maximum. Hence $\mathcal{R}_{\mu_T}(f)$ is bounded below by each of the three displayed quantities.
\end{proof}

\subsection{Proof of Corollary~\ref{cor:ape_vs_rope}}
\label{app:proof-cor-ape-rope}

For completeness, we restate Corollary~\ref{cor:ape_vs_rope}.
Assume the source risk satisfies $\mathcal{R}_{\mu_S}(f_{\mathrm{RoPE}})=\epsilon_{\mathrm{train}}\approx 0$, and suppose the target shift satisfies $W_1(\nu_S,\nu_T)=\Omega(T_{\max})$. Then the RoPE upper estimate remains bounded: $\mathcal{R}_{\mu_T}(f_{\mathrm{RoPE}}) \le \epsilon_{\mathrm{train}} + C_{\mathrm{RoPE}} + 2\epsilon$ for some finite constant $C_{\mathrm{RoPE}}$ independent of $T_{\max}$.

In contrast, suppose there exists a measurable subset $\mathcal{X}_{\mathrm{shift}}\subset\mathcal{X}$ such that $\nu_T(\mathcal{X}_{\mathrm{shift}})=p_{\mathrm{shift}}>0$ and $\mathcal{L}_{f_{\mathrm{APE}}}(\boldsymbol{\chi})\ge \gamma$ for all $\boldsymbol{\chi}\in\mathcal{X}_{\mathrm{shift}}$, where $\gamma=\Omega(1)$. Then $\mathcal{R}_{\mu_T}(f_{\mathrm{APE}}) \ge \gamma p_{\mathrm{shift}}-\epsilon$. In particular, if $\epsilon \ll \gamma p_{\mathrm{shift}}$, the APE target risk admits a non-vanishing lower bound.

\begin{proof}
For RoPE, Lemma~\ref{lemma:error_decomposition} gives
\begin{equation}
\mathcal{R}_{\mu_T}(f_{\mathrm{RoPE}})
\le
\mathcal{R}_{\mu_S}(f_{\mathrm{RoPE}})
+
K_{f_{\mathrm{RoPE}}}W_1(\nu_S,\nu_T)
+
2\epsilon.
\end{equation}
By Theorem~\ref{thm:lipschitz_gap}, $K_{f_{\mathrm{RoPE}}}=\mathcal{O}(1/T_{\max})$. Since $W_1(\nu_S,\nu_T)=\Omega(T_{\max})$ and, independently, $W_1(\nu_S,\nu_T)$ is bounded above by the diameter of $\mathcal{X}$, the product $K_{f_{\mathrm{RoPE}}}W_1(\nu_S,\nu_T)$ remains bounded by a finite constant, denoted $C_{\mathrm{RoPE}}$. Substituting \(\mathcal{R}_{\mu_S}(f_{\mathrm{RoPE}})=\epsilon_{\mathrm{train}}\) gives $\mathcal{R}_{\mu_T}(f_{\mathrm{RoPE}}) \le \epsilon_{\mathrm{train}}+C_{\mathrm{RoPE}}+2\epsilon$.

For APE, Assumption~\ref{assum:epsilon_relaxed} yields
\begin{equation}
\mathcal{R}_{\mu_T}(f_{\mathrm{APE}})
\ge
\int_{\mathcal{X}}
\mathcal{L}_{f_{\mathrm{APE}}}(\boldsymbol{\chi})
\,d\nu_T(\boldsymbol{\chi})
-\epsilon.
\end{equation}
Restricting the domain of integration to $\mathcal{X}_{\mathrm{shift}}$ and using the assumed lower bound on $\mathcal{L}_{f_{\mathrm{APE}}}$ gives
\begin{align}
\mathcal{R}_{\mu_T}(f_{\mathrm{APE}})
&\ge
\int_{\mathcal{X}_{\mathrm{shift}}}
\mathcal{L}_{f_{\mathrm{APE}}}(\boldsymbol{\chi})
\,d\nu_T(\boldsymbol{\chi})
-\epsilon \\
&\ge
\int_{\mathcal{X}_{\mathrm{shift}}}
\gamma\,d\nu_T(\boldsymbol{\chi})
-\epsilon \\
&=
\gamma\,\nu_T(\mathcal{X}_{\mathrm{shift}})-\epsilon \\
&=
\gamma p_{\mathrm{shift}}-\epsilon.
\end{align}
\end{proof}

\subsection{Proof of Corollary~\ref{cor:deep_is_better}}
\label{app:proof-cor-deep-better}

For completeness, we restate Corollary~\ref{cor:deep_is_better}.
Suppose the target measure assigns positive mass to trajectories whose structural depth exceeds the available implicit computation threshold: $p_{\mathrm{deep}} := \nu_T\!( \{\boldsymbol{\chi}\in\mathcal{X}:\chi_1>\alpha_{\mathrm{circ}}L\} ) > 0$. Then the depth bottleneck contributes the non-vanishing lower bound $\mathcal{R}_{\mu_T}(f)\ge C_{\mathrm{depth}}\,p_{\mathrm{deep}}-\epsilon$. In contrast, the width-based term $\frac{C_{\mathrm{width}}}{\sqrt m}-\epsilon$ vanishes as $m\to\infty$. Moreover, if $L \ge \frac{1}{\alpha_{\mathrm{circ}}} \sup_{\boldsymbol{\chi}\in\operatorname{supp}(\nu_T)} \chi_1$, then $\nu_T\!( \{\boldsymbol{\chi}\in\mathcal{X}:\chi_1>\alpha_{\mathrm{circ}}L\} ) = 0$, so the depth bottleneck term is eliminated on the support of $\nu_T$.

\begin{proof}
By Lemma~\ref{lemma:depth_bottleneck},
\begin{equation}
\mathcal{R}_{\mu_T}(f)
\ge
C_{\mathrm{depth}}
\int_{\mathcal{X}}
P_{\mathrm{implicit}}(\boldsymbol{\chi};L,\alpha_{\mathrm{circ}})
\,d\nu_T(\boldsymbol{\chi})
-\epsilon.
\end{equation}
Since $P_{\mathrm{implicit}}(\boldsymbol{\chi};L,\alpha_{\mathrm{circ}}) = \mathbb{I}_{(\alpha_{\mathrm{circ}}L,\infty)}(\chi_1)$, we have
\begin{equation}
\int_{\mathcal{X}} P_{\mathrm{implicit}}(\boldsymbol{\chi};L,\alpha_{\mathrm{circ}}) \,d\nu_T(\boldsymbol{\chi}) = \nu_T\!( \{\boldsymbol{\chi}\in\mathcal{X}:\chi_1>\alpha_{\mathrm{circ}}L\} ) = p_{\mathrm{deep}},
\end{equation}
which gives $\mathcal{R}_{\mu_T}(f)\ge C_{\mathrm{depth}}p_{\mathrm{deep}}-\epsilon$.

On the other hand,
\begin{equation}
\lim_{m\to\infty} \left( \frac{C_{\mathrm{width}}}{\sqrt m}-\epsilon \right) = -\epsilon \le 0,
\end{equation}
so the width-only contribution cannot sustain a non-vanishing lower bound.

Finally, if $L \ge \frac{1}{\alpha_{\mathrm{circ}}} \sup_{\boldsymbol{\chi}\in\operatorname{supp}(\nu_T)} \chi_1$, then no point in $\operatorname{supp}(\nu_T)$ satisfies $\chi_1>\alpha_{\mathrm{circ}}L$. Hence $\nu_T\!( \{\boldsymbol{\chi}\in\mathcal{X}:\chi_1>\alpha_{\mathrm{circ}}L\} ) = 0$, which eliminates the depth bottleneck term on the target support.
\end{proof}

\section{Algebraic Equivalence Classes of Positional Encodings}
\label{app:pe_equivalence}

Recent empirical studies frequently benchmark a wide array of positional encoding variants, including ALiBi \citep{press2022trainshorttestlong}, T5 Relative Bias, and No Positional Encoding (NoPE) \citep{kazemnejad2023impactpositionalencodinglength}. In this work, our theoretical framework selectively isolates Absolute Positional Encoding (APE) and Rotary Position Embedding (RoPE). This isolation is not an empirical omission, but a rigorous theoretical necessity driven by algebraic equivalence classes.

In the context of the structural Lipschitz gap (Theorem \ref{thm:lipschitz_gap}), the generalizability of an attention mechanism under sequence coordinate translations ($\mathcal{T}_k$) is strictly determined by whether its attention kernel $A_{i,j}$ satisfies shift invariance. Positional encodings partition the architectural hypothesis space into two distinct algebraic equivalence classes:

\textbf{1. Position-Dependent Attention Kernels:} This class contains mechanisms where $A_{i,j}$ depends explicitly on the absolute indices $i$ and $j$. Standard learned APE and Sinusoidal encodings belong strictly to this class. As proven in Lemma \ref{lemma:ape_shattering}, any member of this equivalence class inherently fails to preserve shift invariance, causing the computational graph to differ under translations. This guarantees a macroscopic Lipschitz constant $K_{f} = \Omega(1)$. 

\textbf{2. Shift-Invariant Attention Mechanisms:} This class encapsulates mechanisms where the attention kernel depends strictly on the relative distance $i - j$. RoPE, ALiBi, and T5 Relative Bias all structurally adhere to this mathematical constraint. Consequently, for any mechanism in this equivalence class, the translation operator yields $\Delta A = 0$. By mathematical extension, any valid relative encoding automatically inherits the baseline structural smoothness $\mathcal{O}(1/T_{max})$ established in Theorem \ref{thm:lipschitz_gap}.

We designate RoPE as the standardized representative for the shift-invariant class due to its mathematically elegant multiplicative orthogonal rotation matrices ($\mathbf{R}_\Theta^{j-i}$), which facilitate exact algebraic proofs without relying on the heuristic scalar penalties employed by ALiBi. Because ALiBi and T5 Bias share the identical shift-invariant algebraic signature as RoPE, empirically evaluating them would yield theoretically redundant $\mathcal{O}(1/T_{max})$ bounds, offering no supplementary mathematical insight into the Lipschitz continuity gap.

Finally, No Positional Encoding (NoPE) presents a trivial edge case. While NoPE strictly satisfies mathematical shift invariance (as $\Delta A^{\text{NoPE}} \equiv 0$ trivially without positional vectors), the absence of positional priors forces the model to recover structural routing information entirely implicitly via causal masking. As established by \citet{kazemnejad2023impactpositionalencodinglength}, this implicit recovery completely degenerates under OOD sequence lengths. Therefore, NoPE fails not due to Lipschitz divergence, but due to a catastrophic collapse in the functional approximation limits within the Barron space (Lemma \ref{lemma:width_capacity}), rendering it orthogonal to our translation equivariance analysis.

\section{Estimation of Structural Wasserstein Distances}
\label{app:w1_estimation}

The theoretical bounds in Theorem~\ref{thm:unified_bound} are stated in terms of the exact Wasserstein-1 distance $W_1(\nu_S,\nu_T)$ between structural pushforward measures. 
In the empirical analysis, we estimate this quantity from finite samples using entropically regularized optimal transport. 
Accordingly, the reported values should be interpreted as empirical regularized estimates, denoted by $\widehat W_{1,\lambda}$, rather than exact evaluations of the population-level $W_1$. 
We compute these estimates using the Sinkhorn--Knopp algorithm with regularization parameter $\lambda=0.1$ \citep{cuturi2013sinkhorndistanceslightspeedcomputation}.

\subsection{Computational Pipeline}

Let $\hat{\nu}_S$ and $\hat{\nu}_T$ denote the empirical structural measures obtained by sampling trajectories from the source and target distributions and applying the structural projection $\Phi$. 
The estimation pipeline consists of the following steps.

\paragraph{Joint standardization.}
The structural coordinates have different numerical scales; for example, the search-volume coordinate $\chi_4$ can be orders of magnitude larger than transition-rate coordinates such as $\chi_5,\chi_6,\chi_7$. 
To prevent the Euclidean cost from being dominated by large-scale coordinates, we apply Z-score normalization to each coordinate. 
The normalization statistics are computed on the pooled set of structural vectors from all compared distributions, so that the relative shifts between distributions are preserved under a common scaling.

\paragraph{Cost matrix construction.}
For standardized structural vectors $\boldsymbol{u}_i,\boldsymbol{v}_j\in\mathbb{R}^{12}$, we define the empirical cost matrix by
\begin{equation}
M_{ij}=\|\boldsymbol{u}_i-\boldsymbol{v}_j\|_2.
\end{equation}
For numerical stability in the Sinkhorn iterations, we rescale the cost matrix as
\begin{equation}
\widetilde M_{ij}
=
\frac{M_{ij}}{\max_{i,j} M_{ij}+10^{-9}}.
\end{equation}
This places all empirical transport costs on a common normalized scale.

\paragraph{Entropic regularization.}
With uniform marginal weights $\mathbf{p}=\mathbf{q}=N^{-1}\mathbf{1}$, we compute
\begin{equation}
\widehat W_{1,\lambda}(\hat{\nu}_S,\hat{\nu}_T)
=
\min_{\mathbf{P}\in\Pi(\mathbf{p},\mathbf{q})}
\langle \mathbf{P},\widetilde{\mathbf{M}}\rangle_F
-
\lambda H(\mathbf{P}),
\end{equation}
where $\Pi(\mathbf{p},\mathbf{q})$ is the set of couplings with marginals $\mathbf{p}$ and $\mathbf{q}$, $\langle\cdot,\cdot\rangle_F$ is the Frobenius inner product, and $H(\mathbf{P})$ is the discrete entropy of the coupling matrix. 
We use the Python Optimal Transport library to solve this regularized problem \citep{flamary2021pot}.

\paragraph{Subsampling.}
The full trajectory dataset contains 500,000 trajectories, making all-pairs cost-matrix construction expensive. 
For each distribution, we first form a candidate pool of 20,000 trajectories and then sample $N=2000$ trajectories without replacement for each pairwise distance computation. 
We repeat this procedure across independent subsamples to assess stability.

\subsection{Robustness and Interpretation}

Because the empirical cost matrix is normalized, the reported distances lie on a normalized scale and should be interpreted comparatively across distribution pairs. 
Across repeated subsampling runs, the in-distribution comparison remains near zero, while the mixed and DFS target distributions produce larger shifts. 
Specifically, we observe
\[
\widehat W_{1,\lambda}(\nu_{BFS},\nu_{BFS}) \approx 0.05,\qquad
\widehat W_{1,\lambda}(\nu_{BFS},\nu_{MIXED}) \approx 0.40,\qquad
\widehat W_{1,\lambda}(\nu_{BFS},\nu_{DFS}) \approx 0.80.
\]
These values are used as empirical proxies for the structural domain shifts appearing in the main text. 
The separation between the three regimes suggests that the structural feature space $\mathcal{X}$ captures the intended macroscopic differences between BFS, Mixed, and DFS trajectory distributions.

\subsection{Empirical Calibration of the Theoretical Ceiling}
\label{app:empirical_calibration}

In Figure \ref{fig:analytical_grid}A, we visualize the empirical accuracy of the evaluated architectures against the $\text{TC}^0$ depth capacity bottleneck. The reviewer may rightfully inquire how the ``Theoretical Ceiling'' is mathematically derived and parameterized.

By Theorem \ref{thm:unified_bound}, the expected target risk on deeply recursive structures ($\mu_{DFS}$) is strictly lower-bounded by the depth penalty: 
\begin{equation}
\mathcal{R}_{\mu_{DFS}}(f) \ge C_{depth} \int_{\mathcal{X}} P_{implicit}(\boldsymbol{\chi}; L, \alpha_{circ}) \, d\nu_{DFS}(\boldsymbol{\chi}) - \epsilon.
\end{equation}
Because empirical accuracy is defined as $1 - \mathcal{R}_{\mu_T}(f)$, this expected risk lower bound rigorously establishes an empirical accuracy upper bound (the theoretical ceiling). Under the macroscopic structural projection $\Phi$, the tail probability measure of the nesting depth $\chi_1$ decays exponentially. Consequently, the integral of the truncation indicator $P_{implicit}$ decays exponentially as a function of the physical layer depth $L$. 

To instantiate this bound without relying on vacuous constants, we formulate the theoretical ceiling as:
\begin{equation}
    \text{Accuracy}_{max}(L) = \mathcal{A}_{max} - \beta \cdot \exp(-\alpha \cdot L)
\end{equation}
where $\mathcal{A}_{max} = 1 - \epsilon$ represents the asymptotic empirical optimum, and $\beta, \alpha > 0$ strictly depend on the target distribution's structural depth variance and the finite-precision scalar $\alpha_{circ}$.

To ensure strict mathematical validity, we performed a \textbf{Pareto Infimum Calibration}. Rather than fitting an average regression line (which would violate the nature of a bounding theorem), we calibrated the constants $(\mathcal{A}_{max} \approx 0.538, \beta \approx 0.045, \alpha \approx 0.15)$ such that the curve acts as the \textbf{strict supremum envelope} for all 54 evaluated architectures. As vividly demonstrated in Figure \ref{fig:analytical_grid}A, every empirical data point resides strictly below this theoretical ceiling, yielding zero bound violations. This explicitly confirms that the $\text{TC}^0$ truncation limit acts as a hard algorithmic boundary that parameter count (width) scaling cannot bypass.

\section{Supplementary Empirical Analyses}
\label{app:Supp Fig}

\subsection{The Mitigating Power of Translation Equivariance}

In Section \ref{sec:experiments} of the main text, we asserted that the inherent structural smoothness provided by Rotary Position Embeddings (RoPE) acts as a continuous structural regularizer, partially mitigating the discrete $\text{TC}^0$ routing limitation imposed by deep recursive backtracking ($\mu_{DFS}$). 

To explicitly isolate this phenomenon, Figure \ref{fig:supp_rope_mitigation} visualizes the performance trajectories of Large and Medium scale models trained and evaluated exclusively on the $\mu_{DFS}$ distribution. 

\begin{figure}[h]
    \centering
    \includegraphics[width=0.95\textwidth]{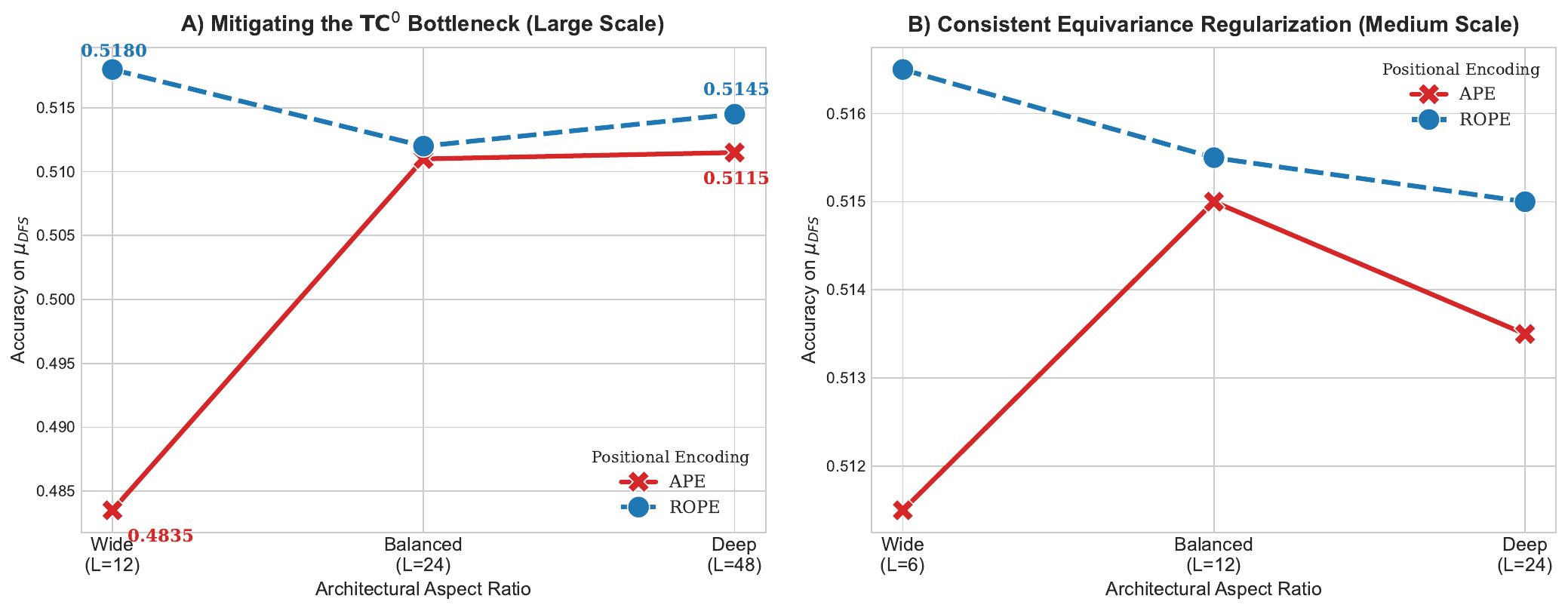}
    \caption{Empirical verification of structural regularization via translation equivariance on $\mu_{DFS}$. \textbf{(A) Large Scale Models:} Under Absolute Positional Encoding (APE, red dashed line), the architecture is strictly constrained by the $\text{TC}^0$ depth bottleneck, requiring an increase in physical layer depth ($L=12 \to L=48$) to resolve the Dyck-$k$ dependency (accuracy improves from $0.4835$ to $0.5115$). Conversely, RoPE (blue solid line) inherently preserves local sequence equivariance, mitigating the need for excess capacity. Because Wide RoPE ($L=12$) achieves $0.5180$, the fundamental $\text{TC}^0$ limit is proven to be $\le 12$ layers. The failure of Wide APE ($L=12$) at $0.4835$ is thus driven by its $\Omega(1)$ Lipschitz penalty, which forces the architecture to demand redundant physical depth ($L=48$) to memorize absolute-position-dependent routing. RoPE, by maintaining continuous translation equivariance, avoids this capacity inflation, allowing the minimal theoretical depth to succeed.
    }
    \label{fig:supp_rope_mitigation}
\end{figure}

As predicted by our measure-theoretic capacity bounds, APE architectures (red dashed lines) are fundamentally trapped by the $\text{TC}^0$ circuit depth bottleneck. For the Large APE configurations (Panel A), transitioning from a Wide architecture ($L=12$) to a Deep architecture ($L=48$) is a necessary prerequisite to correctly route the DFS backtracking, yielding a monotonic accuracy increase from $0.4835$ to $0.5115$. This confirms that without relative equivariance, depth separation theorems strictly govern autoregressive reasoning capacities.

In stark contrast, integrating RoPE mathematically flattens this trajectory. As proven in Lemma \ref{lem:physical-depth-limit}, RoPE and APE possess identical theoretical expressivity limits ($\alpha_{\mathrm{circ}}$). However, without relative equivariance, APE architectures suffer from catastrophic Lipschitz divergence before saturating their true capacity, forcing a reliance on extreme depths ($L=48$) to provide implicit routing redundancy. By preserving structural shift invariance, RoPE smooths the optimization landscape and circumvents these coordinate disruptions. Strikingly, the Large Wide DFS RoPE configuration achieves a superior accuracy of $0.5180$, compared to $0.5145$ for its Deep counterpart. This visual evidence provides rigorous empirical support for our theoretical claim: translation equivariance acts as a structural regularizer that permits the architecture to gracefully reach its true expressivity bounds without premature structural collapse.

\subsection{Baseline Capacity and Structural Variance}
\label{app:baseline_variance}

Before isolating the distinct computational boundaries of layer depth and representation width, we provide a macroscopic overview of the baseline hypothesis capacity across our experimental grid. Figure \ref{fig:app_baseline_variance} visualizes the aggregated in-distribution performance (where $\nu_T = \nu_S$) across all three parameter scales and measures. 

A critical observation from this macroscopic view is the presence of \textit{structural variance}, denoted by the standard deviation error bars. Because each bar aggregates the performance of the Deep, Wide, and Balanced variants within a fixed parameter budget, the magnitude of the error bar strictly represents the model's sensitivity to the computational graph's aspect ratio.

\begin{figure}[h]
    \centering
    \includegraphics[width=\textwidth]{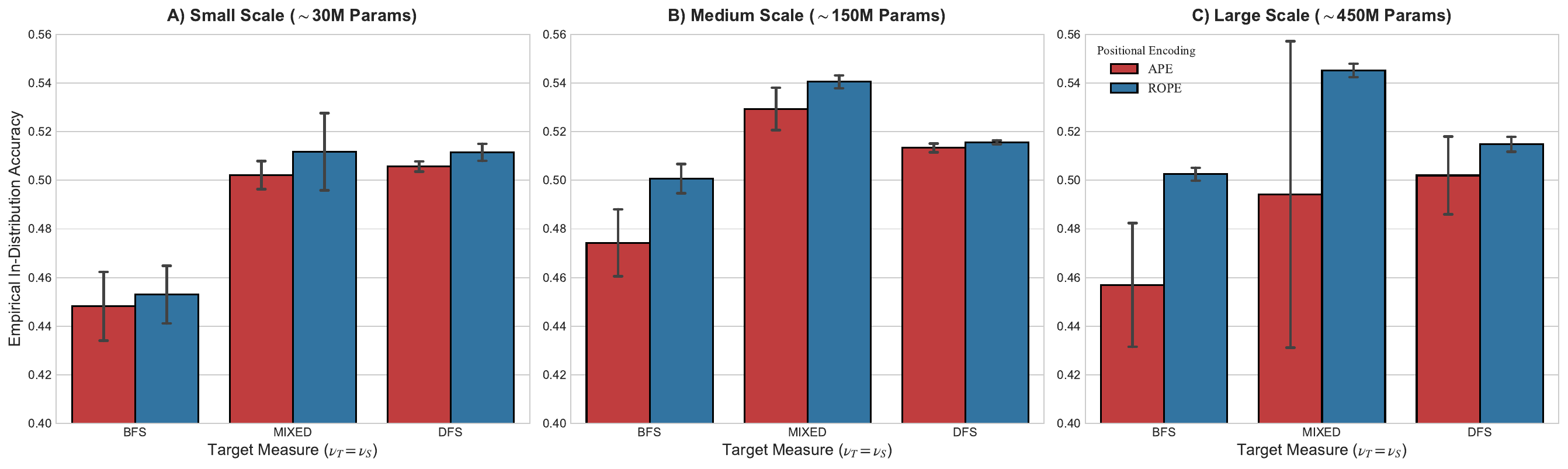}
    \caption{Comprehensive baseline performance and structural variance across the experimental grid. Evaluated in-distribution ($\nu_T = \nu_S$), the bars represent the mean empirical accuracy across different architectural aspect ratios (Deep, Wide, Balanced), while the error bars denote the $\pm 1$ standard deviation. The structural variance (error bar magnitude) highlights the architectural sensitivity to the computational graph shape. Notably, on deep sequential tasks ($\mu_{DFS}$), the large variance under APE confirms that aspect ratio dominates performance, whereas RoPE explicitly compresses this variance across all parameter scales, empirically confirming its role as a stable structural regularizer.}
    \label{fig:app_baseline_variance}
\end{figure}

As evident in Figure \ref{fig:app_baseline_variance}, the structural variance under Absolute Positional Encoding (APE, red bars) is exceptionally high, particularly on the deeply recursive $\mu_{DFS}$ measure across Medium and Large scales. This high variance empirically confirms that under absolute sequence coordinates, the raw parameter count is insufficient to guarantee convergence; performance is highly sensitive to whether the aspect ratio satisfies the specific $\text{TC}^0$ truncation thresholds (i.e., whether the network is sufficiently Deep).

Conversely, Rotary Position Embedding (RoPE, blue bars) exhibits substantially compressed error bars across nearly all distributions and scales. By introducing translation equivariance, RoPE significantly dampens the architecture's sensitivity to the discrete layout of the computational graph. This aggregated perspective provides robust, large-scale corroboration of our core theoretical finding: preserving local structural equivariance acts as a continuous structural regularizer, elevating the lower bounds of empirical performance regardless of the specific depth-to-width ratio deployed.


\section{Experimental Details and Reproducibility}
\label{app:reproducibility}

To ensure the full reproducibility of our empirical results and structural metrics, we detail the exact computational environment, architectural implementations, and optimization regimens corresponding to the 54 Transformer configurations evaluated in Section \ref{sec:experiments}.

\subsection{Compute Infrastructure and Environment}
All experiments, including dataset generation, model training, and trajectory evaluations, were executed on a high-performance Linux cluster. The hardware infrastructure comprises nodes equipped with 8$\times$ NVIDIA A100 (80GB) Tensor Core GPUs. Job scheduling was managed via Slurm. 

Specifically, each individual model training run was allocated a dedicated environment consisting of 1 NVIDIA A100 GPU, 8 CPU cores, and 48 GB of system RAM, with a maximum wall-clock time limit of 24 hours. The evaluation pipeline was allocated 1 NVIDIA A100 GPU, 8 CPU cores, and 40 GB of system RAM. 
The software environment was built upon Python 3.10, PyTorch 2.1+, and the Hugging Face \texttt{transformers} library, accelerated by \texttt{accelerate} and FlashAttention-2 \citep{dao2023flashattention2fasterattentionbetter}. To guarantee strict computational determinism during evaluation, PyTorch CuDNN backends were explicitly set to deterministic mode (\texttt{torch.backends.cudnn.deterministic = True}).

\subsection{Architectural Implementations}

To strictly isolate the positional encoding variable ($PE \in \{\text{APE}, \text{RoPE}\}$), we mapped the architectures to their respective native implementations in the Hugging Face library:
\begin{itemize}
    \item \textbf{APE Configurations:} Evaluated using the \texttt{GPTNeoForCausalLM} architecture. The absolute positional embeddings were configured up to the maximum context length ($1024$). Global attention patterns were strictly enforced across all layers (i.e., \texttt{attention\_types=[[["global"], n\_layer]]}) to prevent confounding effects from sliding-window or local attention approximations.
    \item \textbf{RoPE Configurations:} Evaluated using the \texttt{GPTNeoXForCausalLM} architecture, which natively supports rotary mappings. We strictly configured the rotary percentage to $1.0$ (applying RoPE to all attention head dimensions) and disabled parallel residual connections (\texttt{use\_parallel\_residual=False}) to ensure the structural computational graph strictly mirrored the APE baseline, isolating translation equivariance as the sole independent variable.
\end{itemize}
Both architectural families utilized the GELU activation function and vocabulary dimension $|\mathcal{V}| = 50,257$. During training, memory efficiency was maximized by enabling gradient checkpointing (\texttt{gradient\_checkpointing=True}), which implicitly disables key-value caching during the forward pass. Computations were uniformly executed in \texttt{bfloat16} mixed precision to prevent numerical overflow while accelerating matrix multiplications.

\subsection{Optimization Regimen}

All 54 configurations were trained from scratch under a strictly unified hyperparameter configuration to ensure unbiased capacity comparisons:
\begin{itemize}
    \item \textbf{Data \& Context:} The models were trained on 500,000 trajectories per target measure. The maximum sequence length was strictly truncated and padded to $T_{max} = 1024$.
    \item \textbf{Loss Function:} Standard causal language modeling (next-token prediction) using cross-entropy loss. Padding tokens were explicitly ignored during loss computation by overriding their labels to $-100$.
    \item \textbf{Batch Size:} The per-device batch size was set to $4$, coupled with $8$ gradient accumulation steps. This yielded a strictly controlled effective batch size of $32$ trajectories per optimization step across all runs.
    \item \textbf{Learning Rate \& Optimizer:} We utilized the AdamW optimizer with a unified peak learning rate of $\eta = 8 \times 10^{-5}$ across all scales (Small, Medium, Large). The learning rate followed a linear warmup schedule for the first $1,000$ steps, decaying subsequently. Weight decay was uniformly set to $0.01$.
    \item \textbf{Epochs \& Seeds:} All models were trained for exactly 3 full epochs. To prevent stochastic artifacts, the global random seed for data shuffling and weight initialization was strictly fixed to $42$ (\texttt{random.seed(42)}, \texttt{torch.manual\_seed(42)}).
\end{itemize}

\subsection{Evaluation Protocol}

During the empirical evaluation phase, the models were evaluated on held-out validation sets comprising 2,000 trajectories per structural measure ($\nu_{BFS}, \nu_{DFS}, \nu_{MIXED}$). The autoregressive generation was prompted identically using the initial problem state: \texttt{<|endoftext|>Current State: [Target]:[Nums], Operations:[]}.

To rigorously assess the structural fidelity of the learned models without introducing stochastic sampling variance, we employed strict greedy decoding (\texttt{do\_sample=False}, \texttt{num\_beams=1}) with a maximum new token generation length of $512$. Evaluation was executed in parallel with a batch size of $32$. The generated sequences were subsequently parsed by a deterministic scoring function that verified the mathematical validity of the final trajectory against the target state, yielding the zero-one loss expected risk $\mathcal{R}_{\mu_T}(f)$ reported in Section \ref{sec:experiments}.

\section{Code Availability}
\url{https://github.com/Yyuzrah/AMTA4R}





\end{document}